\theoremstyle{thmstyleone}%
\newtheorem{theorem}{Theorem}
\theoremstyle{thmstyletwo}%
\newtheorem{remark}{Remark}%
\theoremstyle{thmstylethree}%
\begin{document}

\title[Article Title]{Modular Multi-Copter Structure Control for Cooperative Aerial Cargo Transportation}


\author*[1,2]{\fnm{Dimitris} \sur{Chaikalis}}\email{dimitris.chaikalis@nyu.edu}

\author[2]{\fnm{Nikolaos} \sur{Evangeliou}}\email{nikolaos.evangeliou@nyu.edu}

\author[2,3]{\fnm{Anthony} \sur{Tzes}}\email{anthony.tzes@nyu.edu}

\author[1,3]{\fnm{Farshad} \sur{Khorrami}}\email{khorrami@nyu.edu}

\affil*[1]{\orgdiv{Electrical \& Computer Engineering}, \orgname{New York University}, \orgaddress{\city{Brooklyn}, \postcode{11201}, \state{New York}, \country{USA}}}

\affil[2]{\orgdiv{Electrical Engineering}, \orgname{New York University Abu Dhabi}, \orgaddress{\city{Saadiyat Island}, \postcode{129188}, \state{Abu Dhabi}, \country{UAE}}}

\affil[3]{\orgdiv{Center for Artificial Intelligence and Robotics}, \orgname{New York University Abu Dhabi}, \orgaddress{\city{Saadiyat Island}, \postcode{129188}, \state{Abu Dhabi}, \country{UAE}}}



\abstract{The control problem of a multi-copter swarm, mechanically coupled through a modular lattice structure of connecting rods, is considered in this article. The system's structural elasticity 
is considered in deriving the system's dynamics. The devised controller is robust against the induced flexibilities, while an inherent adaptation scheme allows for the control of asymmetrical configurations and the transportation of unknown payloads.
. 
Certain optimization metrics are introduced for solving the individual agent thrust allocation problem while achieving maximum system flight time, resulting in a platform-independent control implementation. Experimental studies are offered to illustrate the efficiency of the suggested controller under typical flight conditions, increased rod elasticities and payload transportation.}

\keywords{Multi-agent swarms, Adaptive Controls, Cooperative Aerial Vehicles, Aerial Transportation}
\maketitle
\clearpage
\section{Introduction}
Aerial systems interacting with their environment~\cite{GT21} 
have attracted increased interest. Advancements in sensing along with on board computational strength in these aerial agents allow their usage in tasks with increased complexity. Collaborating multi-agent systems with certain constraints in their formation have also been utilized~\cite{crazyswarm}. In these cases, proper flight behavior in the presence of any physical connection between individual aerial members is an important attribute~\cite{CZW2019}.

Emerging markets using physically connected aerial vehicles are prominent in the area of aerial/drone delivery systems
, where physically connected aerial vehicles can be used to lift and transport objects of varied weights and dimensions that can not be handled by a single vehicle. The 
logistics behind drone package delivery can be found in~\cite{delivery_logistics}.

Aerial manipulators have also been used for cooperative manipulation and transportation~\cite{MFK2011,LKK2016
}. In general, advanced aerial manipulation concepts and designs bear significant similarity to systems of rigidly connected aerial vehicles due to the dynamic coupling between autonomous subsystems. In~\cite{
PNMC2020}, methods for collaboratively manipulating objects with autonomous aerial agents are studied. Similarly, \cite{LKK2015
} tackle the problem of transportation using aerial vehicles equipped with dexterous robotic arms while~\cite{Aeroarms} includes extensive work on the subject of aerial manipulators heavily interacting with the environment.

Cooperative aerial transportation via use of cables has been heavily researched, due to the added benefit of minimization of interaction forces~\cite{MS2021,RTCSL2019}. Concept designs of aerial vehicles collaboratively transporting objects via contact forces have also been studied in simulation~\cite{UEM2020
}.

In~\cite{OD2011} modules of independently powered connected rotors are considered; these rotors are attached directly on objects in~\cite{MC2019} and an estimation scheme is provided for computing the rotors' positions and thus the moment areas, while the controller uses a pseudo-inverse thrust concept in it structure. The controller requires the utilization of individual IMUs placed at each rotor as well as at the transported payload.

Single Degree-of-Freedom (DoF) connected UAVs appeared in~\cite{CZW2019} and simulation studies validated the overall concept. In~\cite{SGLYK2018,gabrich2020modquad} magnets were used for the copter-attachments followed by a single DoF joint for estimating the system's yaw. The suggested scheme assumed copters connected in sequential manner while complete knowledge of individual altitudes are required. Similarly, in~\cite{MATA2019} copters encountered in close proximity were attached relying on estimation schemes for computing the moment areas, while neural network controllers were tested. Advanced modules of rotor-wheel combinations were considered in~\cite{OJ2021}; these were attached to the carried objects and their optimal attachment positions were computed.

Overactuated aerial vehicles were considered in~\cite{SYGRT2021} and the thrust allocation problem was examined using a quad-copter on a dual-axes gimbal configuration followed by singularity avoidance methods.

The main contributions of this paper are: 1) the design using primitive components (rods, polygons, attachment mechanisms) for interconnecting in a mechanical manner copters for load transportation, 2) derivation of dynamics for the multi-copter configuration while handling structural flexibilities, 3) the design of an adaptive controller robust against these flexibilities and handling the unknown payload properties, and 4) an optimization mechanism for adjusting the thrust to each agent using apriori defined criteria (i.e., maximizing flight time). Compared to its earlier shortened version~\cite{CETK22_icuas}, this article extends the presented theoretical and experimental results by considering the effects of flexibilities in the dynamics and controllers,
while carrying a payload with unknown properties (mass, moments of inertia), as well as improved optimization metrics and experimental validations.
This paper is structured in the following manner. 
Section~\ref{sec:copter_lattice} provides the kinematic analysis of the rigid-structure of the modular copter-structure. Section~\ref{sec:dynamics_section} presents the dynamics of the modular-copter including the flexibility effects, while Section~\ref{sec:control_sec} refers to the adaptive controller design. Experimental studies are presented in Section~\ref{sec:experimental_studies} followed by Concluding remarks.
\section{Rigid Multi-Copter Kinematic Analysis\label{sec:copter_lattice}}
The kinematics of a zero-carried payload $\left(m_p=0\right)$ multi-copter structure is considered. 
Assume a planar\footnote{The copters can be placed in parallel planes and the requirement for a planar configuration can thus be relaxed.} modular lattice comprising of $n$ copters, $r,~(r \geq n)$ connecting rods and $p$ polygons, as shown in Figure~\ref{fig:copter_lattice}. Each side of the rod is connected to a copter or polygon and its other side to another polygon; the rods cannot be deformed and are assumed rigid in this section. 

Let $C_E$ be the Earth-fixed frame, $C_s$ the rigid structure's coordinate frame and $C_i$ the individual copter's coordinate system; the common attribute between $C_s$ and all $C_i,~i=0,\ldots,n-1$ is that their $z$-axes are parallel $(z_s \parallel z_0,\ldots,\parallel z_{n-1})$. The origin of $C_i$ is located at the center of mass of the $i$th-copter, the $x_i$-axis of the IMU is along the line that connects the polygon with the $i$th-copter while the $z_i$-axis is perpendicular to the copters' plane. The only exception being copters attached on top of polygons (3D structures), where the $x_i$-axis is aligned with that of the polygon. For notation simplicity, we assume that the $p_0$ polygon is connected to the $C_0$ copter.

Each $p_i$ equilateral-polygon has $p_{i,f_i}$ faces (i.e, $p_0~(p_1)$ is a hexagon~(square) and has  $f_0=6~(f_1=4)$ faces) with rods departing from its vertices having an inner rod angle of $-\frac{2\pi}{f_i}$. Each copter or polygon is relatively placed with respect to its connecting polygon at a vector with length $l_i$ and angle $j \frac{2 \pi}{f_i}$; let $l_{i}^{j}$ be the length of the rod between polygons $p_i$ and $p_j,i,j\in1,\ldots,p-1$.

Having defined in a systematic manner the mechanical interconnections between polygons and copters the kinematic description for all copters ${\cal C} =\left\{c_i\right\},~i=0,\ldots,n-1$, polygons ${\cal P}=\left\{p_j\right\},~j=0,\ldots,p-1$, types of polygons ${\cal F} = \left\{f_0,\ldots,f_{p-1}\right\}$ and rods ${\cal L}=\left\{l_k \union l_q^r\right\},~k=0,\ldots,n-1,q\neq r \in \left\{0,\ldots,p-1\right\}$ can be expressed with respect to the $\left[C_0,(x_0y_0z_0)\right]$ coordinate system. As an example, from Figure~\ref{fig:copter_lattice},
$\left[C_1,(x_1y_1z_1)\right]=\left[C_0,(x_0y_0z_0)\right] \times \mbox{Trans}(x,l_0)$$\times\mbox{Rot}(z,60^{\circ})\times\mbox{Trans}(x,-)$, and 
$\left[C_2,(x_2y_2z_2)\right]=\left[C_0,(x_0y_0z_0)\right] \times \mbox{Trans}(x,l_0)$$\times\mbox{Rot}(z,120^{\circ})\times\mbox{Trans}(x,-l_0^1)\times\mbox{Trans}(x,-l_2)$

The needed information to be stored for this kinematic analysis are the rods' length array ${\cal L}$, the polygon list ${\cal P}\times{\cal F}$ and the interconnection list for rods ${\cal I}$ corresponding to a matrix $I_{(p)\times(n+p)}$ with elements $I_{i,j},~\sum_{j=1}^{n+p}I_{i,j} =1$, where $I_{i,j} = \left\{ \begin{tabular}{cl} 1 & if $\exists$ rod between polygon $i$ and copter/polygon $j$\\0 & otherwise\end{tabular} \right.$. Inhere, the enumeration of rods regarding the polygons follows that of the copters; for the multi-copter shown in Figure~\ref{fig:copter_lattice}, 
$
{\cal I} = \left[ \begin{array}{cccccc|cc} 
1 & 1 & 0 & 0 & 1 & 1 & 0 & 1\\
0 & 0 & 1 & 1 & 0 & 0 & 1 & 0
\end{array} \right] _{2 \times 8}.
$

\begin{figure}[htbp] 
    \centering
    \includegraphics[width=0.8\columnwidth]{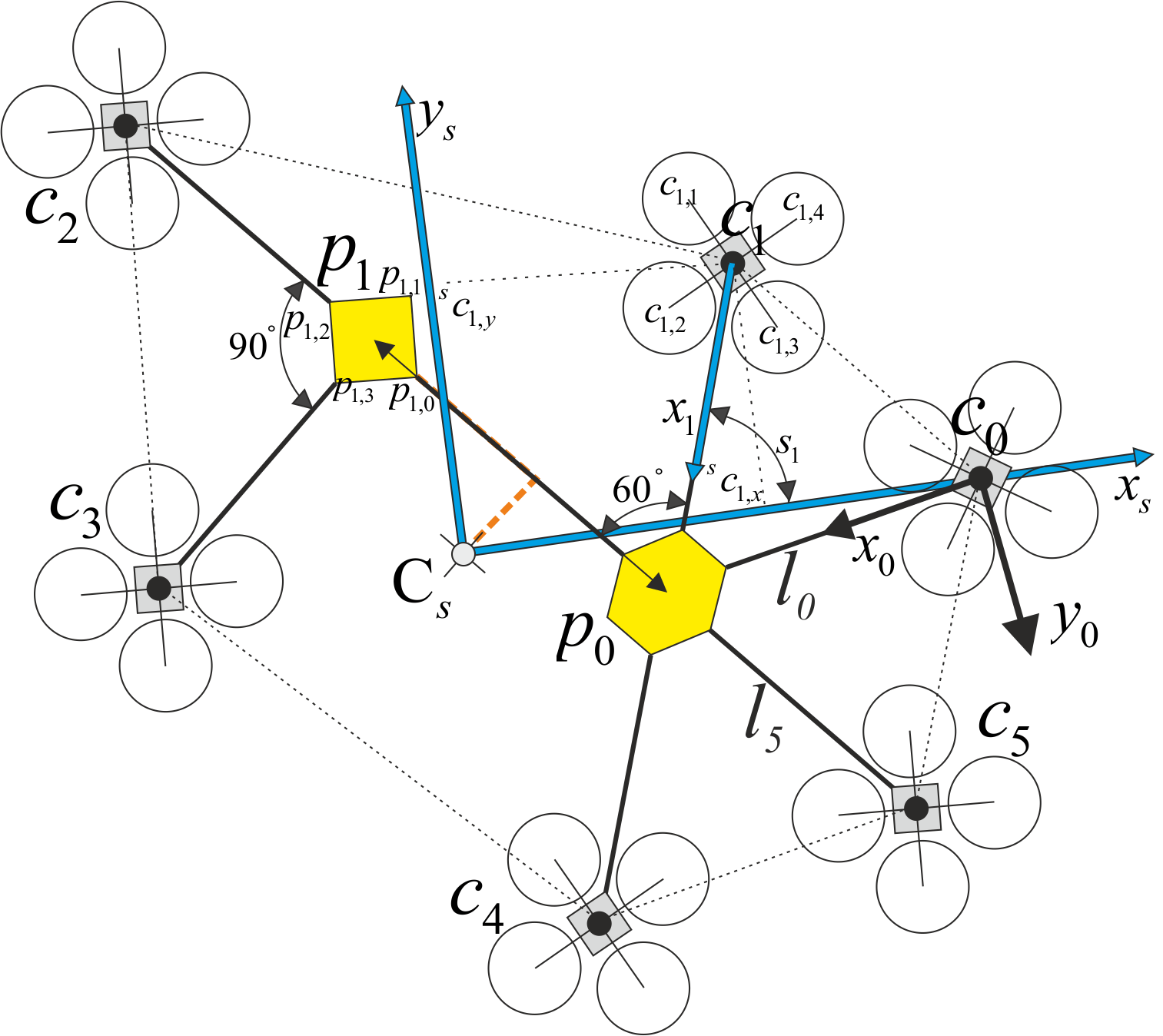}
    \caption{Copter-Lattice with $(n=6)$~copters, $(p=2)$-polygons}
    \label{fig:copter_lattice}
\end{figure}

Let the coordinate frame of the multi-copter system $\left[C_s,(x_sy_sz_s)\right]$; its $C_s$-center, $C_s \in~\mbox{Co}\left(\mathcal{C}, \mathcal{P}\right)$ coincides with the center of mass of the entire copter/polygon/rod structure. The $x_s$-axis points to the vertex $c_i \in {\cal C}$ that has the largest distance from $C_s$; in Figure~\ref{fig:copter_lattice} $\parallel \overrightarrow{C_s c_0}\parallel =\max_{i \in {\cal C}} \parallel \overrightarrow{C_s c_i}\parallel$. 
This uniform framework of assigning coordinate frames for the collective structure is necessary in order to simplify the overall development of the presented controllers.
It should be noticed for the case of carried payload $m_p \neq 0$, $\left [C_s,(x_sy_sz_s)\right]$ changes and the adaptive algorithm estimates recursively this coordinate system. 

Having assigned $\left[C_s,(x_sy_sz_s)\right]$, the pose of all copters with respect to it, can be computed. For brevity let the pose attributes for the $i$th drone be 
$\left[\left(^sc_{i,x} {}^sc_{i,y} 0\right),\alpha_i \right]$, resulting in a homogeneous matrix $^sA_i$ describing its pose with respect to $\left[C_s,(x_sy_sz_s)\right]$ 
\[
\small
    ^sA_i
    \hspace*{-1mm} 
    =
    \hspace*{-1mm}
    \left[ 
    \begin{array}{ccc|c}
\cos(\alpha_i) & -\sin(\alpha_i) & 0 & ^sc_{i,x} \\
\sin(\alpha_i) & \cos(\alpha_i)  & 0 & ^sc_{i,y} \\
0         & 0          & 1 & 0\\ \hline
0 & 0 & 0 & 1
    \end{array}
\right]
\hspace*{-1mm}
=
\hspace*{-1mm}
\left[ 
    \begin{array}{c|c}
    \mbox{Rot}\left( z,\alpha_i \right) & \begin{array}{@{}c@{}} ^sc_{i,x} \\ ^sc_{i,y} \\ 0\end{array} \\ \hline
    0_{1 \times 3} & 1
    \end{array}
\right]
\]
\normalsize
where $\alpha_i$ is the angle between the the copter and the the $x_s$-axis and $(^sc_{i,x}, ^sc_{i,y})$ is the planar\footnote{The same analysis holds for non-planar drones, where there is a nonzero displacement in the $z_s$-axis, resulting in the (3,4) element for matrix $^sA_i$ to be $^sc_{i,z}$ rather than 0.}displacement from $C_s$. 

\begin{figure}[htbp] 
    \centering
    \includegraphics[keepaspectratio,width=0.75\columnwidth]{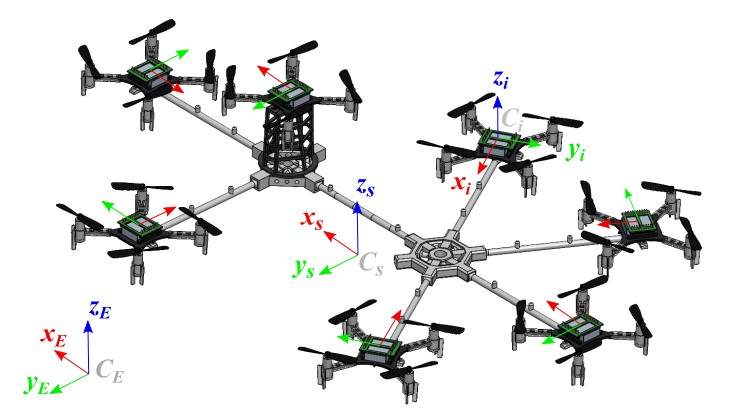}
    \caption{Example modular non-planar copter-structure.}
    \label{fig:copter_lattice}
\end{figure}

The procedure initially computes the drones' pose with respect to $\left[C_0,(x_0y_0z_0)\right]$ followed by a transformation to $\left[C_s,(x_sy_sz_s)\right]$.

The displacement vectors of all copters are required to solve the control allocation problem, while the orientation angles $\alpha_i$ are used by the attitude controller and readjusting the IMU-readings of the coupled copters.
\section{Multi-Copter Structure Dynamic Model \label{sec:dynamics_section}}
During the aforementioned kinematic analysis of the multi-copter system, it was assumed that the rods were rigid. If elongated rods are used, this gives rise to certain bending flexibilities. These flexibilities can influence the flight dynamics and counteract the benefits enjoyed by the adoption of long rods. These benefits include less required effort per agent in generating torques for maneuvering the structure, Assuming the rod-model to correspond to that of a clamped-free (or clamped-clamped) beam where the clamped side is at the used polygon and the free one at the copter side, this provides significant static displacement at its tip (copter-side). As an example, in Figure~\ref{fig:bent-frame}, a rod with the same attributes employed in our experimental studies (280mm length, 5mm diameter, ABS-material) is subjected to a 0.46~N thrust force by the attached copter at the end and results in a 10~mm bending at the copter side and an angle $\gamma \simeq$6.5~degrees; the generated result is computed using the ComSol Multiphysics software. 

\begin{figure}[htbp]
    \centering
    \includegraphics[keepaspectratio,height=0.4\columnwidth]{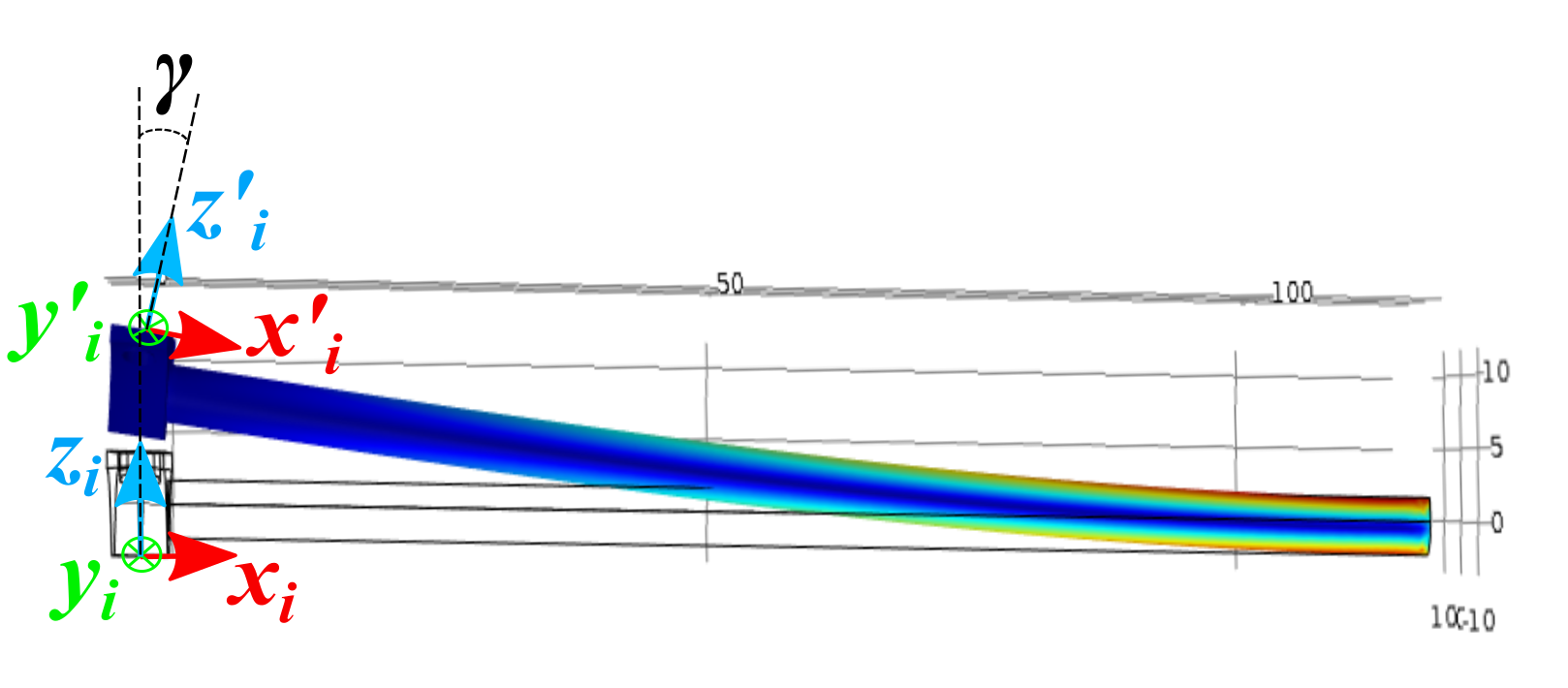}
    \caption{Copter-Polygon (Left-Right side) Flexible Bending}
    \label{fig:bent-frame}
\end{figure}

This flexibility has adverse effects on the IMU-sensor readings of each copter, since these are being made with respect to the deformed (flexible) system $\left[C_i^{'},(x_i^{'}y_iz_i^{'})\right]$ rather than the rigid one $\left[C_i,(x_iy_iz_i)\right]$. Furthermore the control inputs $T_i, M_i$ of the $i$th copter must be computed with respect to the $z_i^{'}$-axis. In the static configuration, the rotation between $C_i$ and $C^{'}_i$ is a rotation around the $y_i$-axis by a bending angle $\gamma_i$, while the relative displacement is modeled by an elevation $\delta_i^z$ along the the $z_i$ axis.
The transformation between $C_i$ and $C^{'}_i$ is
$
\mbox{Rot}_{y}(\gamma_i) \mbox{Trans}\left(z,\delta^z_i\right).
$
In the sequel let the altitude and attitude of the centroid of the multi-copter system (including the carried payload) expressed with respect to the Earth coordinate system be denoted as $^E\mathbf{X}_c=\left[ x_c,y_c,z_c\right]^T$ and $^E\mathbf{\Gamma}_c=\left[\phi,\theta,\psi\right]^T$. It should be noted that the coordinate system $\left[ C_s(x_sy_sz_s) \right]$ does not coincide with the coordinate system of the multicopter system, even when the roll, pitch and yaw angles are zero $\left[\phi,\theta,\psi\right]=[0,0,0]$ for the case of a non zero assymetric payload $(m_p \neq 0)$.

The system's dynamics can be computed by taking the extra transformation into account as:
\begin{eqnarray}
\left[
\begin{matrix}
\ddot{x}_c \\ \ddot{y}_c \\ \ddot{z}_c
\end{matrix} \right] &=& 
\left[ 
\begin{matrix}
0 \\ 0 \\ -g
\end{matrix} \right] + \frac{1}{m} {}^E \mbox{R}_s(\phi,\theta,\psi) \sum_{j=0}^{n-1} \left( \mbox{Rot}_{z}(\alpha_j) \mbox{Rot}_{y}(-\gamma_j) \left[ \begin{matrix}
0 \\ 0 \\ T_j
\end{matrix}\right] \right)\label{eq:flex_pos},
\\
\left[ \begin{matrix}
I_{x} \ddot{\phi} \\ I_{y} \ddot{\theta} \\ I_{z} \ddot{\psi}
\end{matrix} \right] &=&
- \left[ \begin{matrix}
\dot{\theta} \\ \dot{\theta} \\ \dot{\psi} 
\end{matrix} \right] \times \left[ \begin{matrix}
I_{x} \dot{\phi} \\ I_{y} \dot{\theta} \label{eq:flex_rot}\\ I_{z} \dot{\psi} 
\end{matrix} \right] +  \left[ \begin{matrix}
\tau^c_x \\ \tau^c_y \\ \tau^c_z 
\end{matrix} \right] + \left[ \begin{matrix}
\tau_x^s \\ \tau_y^s \\ \tau_z^s
\end{matrix} \right], 
\end{eqnarray}
with the torque control definition:
\begin{eqnarray}
\left[ \begin{matrix}
\tau^c_x \\ \tau^c_y \\ \tau^c_z
\end{matrix} \right] &=&
\sum_{j=0}^{n-1} \begin{small}\left( \left[ 
\begin{matrix}
^sc_{j,x} \\ ^sc_{j,y} \\ 0
\end{matrix}\right] + \mbox{Rot}_z(\alpha_j)\left[ \begin{matrix}
0\\0\\ \delta^z_j
\end{matrix} \right] \right) \times \left( \mbox{Rot}_z(\alpha_j) \mbox{Rot}_y(-\gamma_j) \left[ \begin{matrix}
0\\0\\T_j
\end{matrix} \right] \right) \end{small}+\nonumber \\
&&\sum_{j=0}^{n-1} \mbox{Rot}_z(\alpha_j) \mbox{Rot}_y(-\gamma_j) \left[
\begin{matrix}
0\\0\\M_j
\end{matrix}\right], ~\mbox{and}\\\label{eq:dynamics_flex_abstract}
\left[ \begin{matrix}
\tau^s_x \\ \tau^s_y \\ \tau^s_z
\end{matrix} \right] &=& \boldsymbol{r}_p \times \left( ({}^E R_s)^T \left[ \begin{matrix}
0\\0\\-m_p g \end{matrix} \right] \right).
\end{eqnarray}
In the previous formulation, 
$g$ is the gravity, $m$ the total mass (including the unknown payload), ${}^E \mbox{R}_s = \mbox{Rot}_z(\psi) \mbox{Rot}_y(\theta) \mbox{Rot}_x(\phi)$ is the $3 \times 3$ rotation matrix from the structure-fixed frame to the Earth, $\tau_i^s~(I_{i})$ represents the unknown static torque~(the moment of inertia) acting on the structure's $i$th-axis, $i\in\{x,y,z\}$ due to a Center of Mass (CoM) displacement, $m_p, \boldsymbol{r}_p$ the unknown mass and 3D displacement vector between structure CoM and payload CoM,
$\gamma_j, \delta_j^z$ the bending angle and  elevation (deformation) experienced by copter-$j$ respectively, $T_j$ the total thrust produced by copter-$j$ and $M_j$ the total yaw moment produced by the propellers of copter-$j,~j=0,\ldots,n-1$.

In~\eqref{eq:dynamics_flex_abstract} the only torque ($M_j$) generated by each agent $j$ is about its $z_j$- axis. This is a direct outcome of the controller design, as only the thrusts of the agents are selected to control the attitude of the structure, with each agent being commanded to produce zero individual torques about its own $x_j,y_j$ axes.

Let $s(\cdot)=\sin(\cdot),~c(\cdot)=\cos(\cdot)$, then \eqref{eq:flex_pos} can be condensed  as
\begin{eqnarray}
\left[
\begin{matrix}
\ddot{x}_c \\ \ddot{y}_c \\ \ddot{z}_c
\end{matrix} \right]
\hspace*{-0.2cm}
&=& \hspace*{-0.2cm}
-g \left[ \begin{array}{c} 0\\0\\1 \end{array} \right]
+ \frac{1}{m} {}^E\mbox{R}_s 
\begin{small}\left[
\begin{array}{ccc}
-s(\gamma_0) c(\alpha_0) &  \dots & -s(\gamma_{n-1}) c(\alpha_{n-1})\\
-s(\gamma_0) s(\alpha_0) &  \dots & -s(\gamma_{n-1}) s(\alpha_{n-1})\\
c(\gamma_0) & \dots & c(\gamma_{n-1})
\end{array} \right] 
\left[ \begin{array}{c}
T_0 \\  \vdots \\ T_{n-1} \end{array}\right]\end{small},\nonumber\\
^E\mathbf{\ddot{X}}_c
\hspace*{-0.2cm}&=&\hspace*{-0.2cm}-g e_3 + \frac{1}{m} {}^E\mbox{R}_s \Psi
\left[\begin{array}{c}
T_0 \\  \vdots \\ T_{n-1}
\end{array}\right]
\label{eq:flex_pos_exp}
\end{eqnarray}
Contrary to the rigid-dynamics case, where the sum of thrusts is relevant for position control, in the ``flexible"-rod system dynamics, the individual thrusts appear in \eqref{eq:flex_pos_exp}.

Similarly the attitude dynamics can be compacted; consider $\Omega=^E\mathbf{\dot{\Gamma}}_c$, the diagonal inertia matrix $\mathbf{J}=\left[ \begin{array}{ccc} I_x & 0 & 0\\ 0 & I_y & 0\\ 0 & 0 & I_z \end{array}\right]$,  
\begin{eqnarray}
\Xi&=&\left[\begin{array}{c|c|c} \Xi_{0}& \dots& \Xi_{n-1}\end{array} \right]^T,\mbox{~where~} 
   \Xi_{i}=\left[ \begin{array}{c}
   \delta^z_i s(\gamma_i) s(\alpha_i) + {}^sc_{i,y} c(\gamma_i) \\
    -\delta^z_i s(\gamma_i) c(\alpha_i) - {}^sc_{i,x} c(\gamma_i) \\
    {}^sc_{i,y} s(\gamma_i)c(\alpha_i) - {}^sc_{i,x} s(\gamma_i)s(\alpha_i) 
    \end{array}
    \right], 
\label{eq:XI_i}
\end{eqnarray}
and the unknown static torque vector $\boldsymbol{\tau}^s=\left[ \begin{array}{c} \tau_x^s,\tau_y^s,\tau_z^s \end{array}\right]^T$, then \eqref{eq:dynamics_flex_abstract} can be rewritten as  
\begin{equation}
\mathbf{J}{\Dot{\Omega}} = -\Omega \times \left( \mathbf{J}\Omega\right) + \Xi \left[ \begin{matrix}
T_0 \\ \vdots \\ T_{n-1}
\end{matrix}\right] + \Psi \left[ \begin{matrix}
M_0 \\ \vdots \\ M_{n-1}
\end{matrix} \right] + \boldsymbol{\tau}^s . \label{eq:flex_rot_exp}
\end{equation}
It is apparent that the yaw moments of all copters are reoriented in a similar manner thus taking into advantage the adjustment of the thrusts-induced altitude control
. 

Although the reprojection of the thrusts due to the $\Xi$ matrix seems complicated, due to the small numbers involved in the first two rows of each column $\Xi_i$ since when $\gamma_i\simeq 0^{\circ}$, then 
$
   \Xi_{i}\simeq\left[ \begin{array}{c}  {}^sc_{i,y}, - {}^sc_{i,x}, 0 
    \end{array}
    \right]^T, 
$
and its effect can easily be quantified.


Under the assumption of a clamped-free beding beam model for each polygon-drone case, where the polygon-side sorresponds to the clamped-end and the drone-side the free one, then the maximum deflection and slope for the static case can be computed. Using an Euler-Bernoulli formulation~\cite{M1992} and elastic rods with uniform density and section moments of inertia, then the elastic-rod's dynamics is 
$
    \rho \frac{\partial^2 z_i(x_i,t)}{\partial t^2} + E I \frac{\partial^4 z_i(x_i,t)}{\partial x_i^4}=0,~i\in\{0,\ldots,n-1\},~x_i\in [-l_i,0]
$
where $\rho$ is the mass linear density, $E$ is the Young's modulus, $I$ the moment of inertia, and $z_i(x_i,t)$ the deflection of the rod along the $x_i$-axis. The boundary conditions are 
$z_i(-l_i,t)=0,~\frac{\partial z_i(-l_i,t)}{\partial x_i}=0,~\frac{\partial^2 z_i(0,t)}{\partial x_i^2}=0$, and $\frac{\partial^3 z_i(l_i,t)}{\partial x_i^3}=T_i(t)-m_ig$.
Assuming static loading conditions $(T_i(t)=T_i$), then 
$z_i(x_i,t)=(T_i-m_i g) \frac{\left(x_i-l_i\right)^2}{6EI}(3l_i-x_i)$, with a maximum deflection~(slope) at the drone attachment
\begin{equation}
\delta_i^z = \max z_i(l_i,t) =\left( T_i-m_i g\right) \frac{l_i^3}{3 EI},~
\gamma_i=\left(T_i-m_i g\right)\frac{l_i^2}{2EI}.
\label{eq:flexible_bending}
\end{equation}
These static `bending' estimates of each rod, given the corresponding copter's thrust will be used in the controller design as disturbances that need to be attenuated.
\section{Controller Design} \label{sec:control_sec}
The selected controller architecture uses: 1)  a position controller computing the desired total thrust force and responsible for atlitude of the structure, and 2) an attitude controller that computes the necessary torque control inputs. Inhere, an optimizer finds individual agent thrusts and yaw moments $(T_j, M_j)$, in order to produce the desired total thrust and torques.
These individual agent setpoints are communicated to each agent's on-board flight controller to adjust the angular velocities $\Omega_j^m$ of its individual motors.

\subsection{Positioning Controller}
The controller avoids creating aggressive maneuvers, and the position controller is thus computed for the multicopter's linerized dynamics around hovering~\cite{MCO2021}. Under the assumption of equal participation by each copter to the necessary thrust $(T_i^{\circ}= \frac{mg}{n})$, then the  linearization dynamics from \eqref{eq:flex_pos_exp} around $^E\mathbf{X}_c^{\circ}=\left[x_c^{\circ},y_c^{\circ},z_c^{\circ}\right]$, $(\phi,\theta, \psi)^{\circ}=(0,0,0)$, where $(\cdot)=(\cdot)^{\circ}+\Delta(\cdot)$, result in:
\begin{equation}
\left[ \begin{matrix}
^E\Delta\dot{\mathbf{X}}_c
\\ ^E\Delta\ddot{\mathbf{X}}_c
\end{matrix} \right] = \left[ \begin{matrix}
^E\Delta\dot{\mathbf{X}}_c \\ 0_{3 \times 1} 
\end{matrix} \right] + 
\left[ \begin{array}{c}
0_{3 \times 3}\\ \hline
\mathbf{A}_\omega 
\end{array}
\right]
\left[ 
\begin{matrix}
\Delta\phi \\ \Delta\theta \\ \Delta\psi
\end{matrix} \right] + 
\left[ \begin{array}{c}
0_{3 \times n}\\ \hline
\mathbf{B}_\omega
\end{array} \right]
\left[ \begin{matrix}
\Delta T_0 \\ \vdots \\ \Delta T_{n-1}
\end{matrix} \right],~\mbox{where}
\end{equation}
\begin{small}
\begin{eqnarray}
\mathbf{A}_\omega &=& g\left[ 
\begin{matrix}
0 & \frac{\sum_{i=0}^{n-1} c(\gamma_i)}{n} & \frac{\sum_{i=0}^{n-1} s(\alpha_i)s(\gamma_i)}{n}\\
-\frac{\sum_{i=0}^{n-1} c(\gamma_i)}{n} & 0 & - \frac{\sum_{i=0}^{n-1} c(\alpha_i)s(\gamma_i)}{n}\\
- \frac{\sum_{i=0}^{n-1} s(\alpha_i)s(\gamma_i)}{n} & \frac{\sum_{i=0}^{n-1} c(\alpha_i)s(\gamma_i)}{n} & 0
\end{matrix} \right], \label{eq:A_omega}
\\
\mathbf{B}_w &=& \left[ \begin{array}{c|c|c}
-\frac{c(\alpha_0)s(\gamma_0)}{m} & \ldots & -\frac{c(\alpha_{n-1})s(\gamma_{n-1})}{m} \\
-\frac{s(\alpha_0)s(\gamma_0)}{m} & \ldots & -\frac{s(\alpha_{n-1})s(\gamma_{n-1})}{m} \\
\frac{c(\gamma_0)}{m} & \ldots & \frac{c(\gamma_{n-1})}{m}
\end{array} \right]. \label{eq:B_reg}
\end{eqnarray}
For small angles $(\gamma_i\simeq 0)$, skew symmetric matrix $\mathbf{A}_{\omega}$ and matrix $\mathbf{B}_{\omega}$ degenerate to
\[
\mathbf{A}_{\omega} = g \left[ \begin{array}{ccc} 
0&1&\frac{\sum_{i=0}^{n-1 }s(\alpha_i)\gamma_i}{n}\\
*&0&-\frac{\sum_{i=0}^{n-1} c(\alpha_i) \gamma_i}{n}\\
*&*&0
\end{array}\right],~~
\mathbf{B}_{\omega} = \left[ \begin{array}{c|c|c}
-\frac{c(\alpha_0)\gamma_0}{m} & \ldots & -\frac{c(\alpha_{n-1})\gamma_{n-1}}{m} \\
-\frac{s(\alpha_0)\gamma_0}{m} & \ldots & -\frac{s(\alpha_{n-1})\gamma_{n-1}}{m} \\
\frac{1}{m} & \ldots & \frac{1}{m}
\end{array} \right].
\]
\end{small}
It should be noted that the elements of the last row of matrix $\mathbf{B}_{\omega}$ are independent of $\gamma_i$, and thus the vertical acceleration of the multicopter system can easily be computed.
In this case (small deflection angles), 
\begin{eqnarray}
\Delta \ddot{x}_c &=& g \Delta \theta +
\sum_{i=0}^{n-1} \gamma_i \left[\frac{s(\alpha_i) \Delta \psi}{n} - \frac{c(\alpha_i) \Delta T_i}{m} \right]=
g \Delta \theta +
\sum_{i=0}^{n-1} \gamma_i \xi_i^x
\label{eq:deltax}
\\
\Delta \ddot{y}_c &=& -g \Delta \phi +
\sum_{i=0}^{n-1} \gamma_i \left[ \frac{-c(\alpha_i) \Delta \psi}{n} - \frac{s(\alpha_i) \Delta T_i}{m} \right] =
- g \Delta \phi +
\sum_{i=0}^{n-1} \gamma_i \xi_i^y
\label{eq:deltay}
\\
\Delta\ddot{z}_c &=& \sum_{i=0}^{n-1} \frac{\Delta T_i}{m} + 
\sum_{i=0}^{n-1} \gamma_i \frac{-g s(\alpha_i) \Delta \phi + g c(\alpha_i) \Delta \theta}{n} 
= \sum_{i=0}^{n-1} \frac{\Delta T_i}{m} +
\sum_{i=0}^{n-1} \gamma_i \xi_i^z
\label{eq:deltaz}
\end{eqnarray}
\begin{theorem}
The backstepping PD-alike altitude controller computes 
\begin{equation}
\sum_{i=0}^{n-1} \Delta T_i= m \left( \left[ -K_{z1} - K_{z2} \right] \dot{e}_z -
\left[ 1+ K_{z1}K_{z2} \right] e_z -\sum_{i=0}^{n-1} \gamma_i \xi_i^z\right), \label{eq:final_z_control}
\end{equation}
where $e_z = z^{\circ} - z^d$, and $K_{z1},K_{z2} >0$. 
The control input including the feedforward term $T_i^{\circ}$ and the differential thrusts $\Delta T_i$ satisfying \eqref{eq:final_z_control} forces $e_z \rightarrow 0$.
\end{theorem}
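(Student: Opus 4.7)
The approach is classical two-step backstepping applied to the scalar altitude subsystem \eqref{eq:deltaz}, using $\sum_{i=0}^{n-1}\Delta T_i / m$ as the sole control channel and treating the flexibility-induced coupling $\sum_{i=0}^{n-1}\gamma_i\,\xi_i^z$ as a known feed-forward signal to be cancelled. Because the feedforward thrust $T_i^{\circ}=mg/n$ has already balanced gravity in the linearization about hover, the altitude-tracking error $e_z = z^{\circ}-z^d$ (with $z^d$ held constant for the altitude-hold scenario) obeys
\begin{equation*}
\ddot e_z \;=\; \tfrac{1}{m}\sum_{i=0}^{n-1}\Delta T_i \;+\; \sum_{i=0}^{n-1}\gamma_i\,\xi_i^z .
\end{equation*}

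First I would introduce the virtual velocity command $\dot z^{\mathrm{v}}=-K_{z1}e_z$ together with the Lyapunov candidate $V_1=\tfrac12 e_z^2$, so that if $\dot e_z$ were an input the choice $\dot e_z=\dot z^{\mathrm v}$ would give $\dot V_1 = -K_{z1}e_z^{\,2}$. Since $\dot e_z$ is a state, I define the second error $e_{z2}=\dot e_z + K_{z1}e_z$ and the augmented Lyapunov function $V_2 = V_1 + \tfrac12 e_{z2}^{\,2}$. A straightforward differentiation produces
\begin{equation*}
\dot V_2 \;=\; -K_{z1}\,e_z^{\,2} \;+\; e_{z2}\!\left( e_z + K_{z1}\dot e_z + \tfrac{1}{m}\!\sum_{i=0}^{n-1}\Delta T_i + \sum_{i=0}^{n-1}\gamma_i\,\xi_i^z \right).
\end{equation*}
Forcing the bracketed quantity to equal $-K_{z2}\,e_{z2}$ and then eliminating $e_{z2}$ in favor of $(e_z,\dot e_z)$ recovers \eqref{eq:final_z_control} exactly, and yields $\dot V_2 = -K_{z1}e_z^{\,2} - K_{z2}e_{z2}^{\,2}$, which is negative definite for $K_{z1},K_{z2}>0$; hence $(e_z,e_{z2})\to 0$ and in particular $e_z\to 0$. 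A direct verification gives the same conclusion: substituting \eqref{eq:final_z_control} back into the error dynamics cancels the coupling term and leaves $\ddot e_z + (K_{z1}{+}K_{z2})\dot e_z + (1+K_{z1}K_{z2})e_z = 0$, whose characteristic polynomial has strictly positive coefficients and is therefore Hurwitz.

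The main obstacle I expect is not the backstepping algebra but justifying the cancellation of $\sum_{i=0}^{n-1}\gamma_i\,\xi_i^z$. By the definition given just after \eqref{eq:deltaz}, $\xi_i^z = (-g\,s(\alpha_i)\Delta\phi + g\,c(\alpha_i)\Delta\theta)/n$, so this term couples altitude to attitude and is not strictly exogenous. A clean argument therefore either (i) treats $\xi_i^z$ as a measurable signal supplied by the attitude loop and invokes a cascade/time-scale-separation argument once the attitude controller of Section~\ref{sec:control_sec} is shown to stabilize $(\Delta\phi,\Delta\theta)$, or (ii) uses the fact that the static bending estimates $\gamma_i$ from \eqref{eq:flexible_bending} remain small in the regime underlying the small-angle approximation, so the coupling enters as a vanishing perturbation. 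Within the scope of the theorem as stated, which only asks that the prescribed $\Delta T_i$ drive $e_z\to 0$ once the indicated cancellation is performed, the proof reduces to the two-step Lyapunov argument above.
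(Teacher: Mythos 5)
Your proof is correct and follows essentially the same route as the paper's Appendix~A: the same two-step backstepping construction with virtual velocity $-K_{z1}e_z$, composite error $\dot e_z + K_{z1}e_z$, and Lyapunov function $\tfrac12 e_z^2 + \tfrac12(\dot e_z+K_{z1}e_z)^2$ yielding $\dot V = -K_{z1}e_z^2 - K_{z2}s_z^2$. Your closing remark that $\xi_i^z$ depends on $(\Delta\phi,\Delta\theta)$ and so the cancellation is not strictly exogenous is a fair caveat that the paper leaves implicit, but it does not change the argument.
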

\begin{proof}
Through the use of the Lyapunov function $V_z = \frac{\parallel e_z \parallel^2 + \parallel \dot{e}_z + K_{z1} e_z \parallel^2}{2}$ and application of~\eqref{eq:final_z_control}, then $\dot{V} \leq 0$, as shown in Appendix~A and the closed-loop is rendered stable.
\end{proof}
\begin{remark}
For the rigid-case, where $\gamma_i=0$, the controller degenerates to that of a PD-controller.
\end{remark}
\begin{remark}
The controller-formulation assumes knowledge of the carried payload in computing $T_i^{\circ}$ and in \eqref{eq:final_z_control}. If this is unknown, an extra term should be added and the controller needs to be modified as follows 
\begin{equation}
\dot{\hat{m}} = - \sigma \left(\dot{e}_z + K_{z1}e_z\right) \left( -\left[K_{z1} + K_{z2}\right]\dot{e}_z - \left[ 1+K_{z1}K_{z2}\right]e_z - \sum_{i=0}^{n-1} {\gamma_i} \xi_i^z \right),
\end{equation}
with $\hat{m}$ replacing the term $m$ in \eqref{eq:final_z_control} and $\sigma >0$..
\end{remark}
\begin{proof}
See Appendix B.
\end{proof}

Having computed the $\Delta T_i$-terms, the $\xi_i^x$ and $\xi_i^y$ terms can be computed in \eqref{eq:deltax} and \eqref{eq:deltay}, respectively. Let $e_x=x^{\circ}-x^d$ and $e_y=y^{\circ}-y^d$. Then
\begin{theorem}
The backstepping based PD-alike controller, adjusting the roll and pitch angles as
\begin{eqnarray}
\theta^d & =& 
\left[ -K_{x1} - K_{x2} \right] \dot{e}_x -
\left[ 1+ K_{x1}K_{x2} \right] e_x -\sum_{i=0}^{n-1} \gamma_i \xi_i^x, \label{eq:final_x_control}\\
\phi^d &=& 
\left[ -K_{y1} - K_{y2} \right] \dot{e}_y -
\left[ 1+ K_{y1}K_{y2} \right] e_x -\sum_{i=0}^{n-1} \gamma_i \xi_i^y, \label{eq:final_y_control}\\\nonumber
\end{eqnarray}
where $K_{x1}, K_{x2}, K_{y1}, K_{y2} >0$ renders the closed-loop altitude system stable yielding $e_x \rightarrow 0$ and $e_y \rightarrow 0$. 
\end{theorem}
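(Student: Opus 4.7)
The plan is to mirror the altitude backstepping argument of Theorem~1 / Appendix~A, applied independently to the two translational channels, since the linearized dynamics \eqref{eq:deltax}--\eqref{eq:deltay} have exactly the same one-dimensional, second-order form as \eqref{eq:deltaz} once $\Delta\theta$ (resp.\ $\Delta\phi$) is treated as a virtual control. Note that by the time these controllers run, Theorem~1 has already fixed the thrust differentials $\Delta T_i$, so the deflection-induced cross-terms $\xi_i^x,\xi_i^y$ are known scalars that can be cancelled by feedforward, which is exactly what the last summation in \eqref{eq:final_x_control}--\eqref{eq:final_y_control} does.

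First, I would invoke the standard cascaded timescale separation assumed in quadrotor control, so that the inner attitude loop delivers $\Delta\theta\simeq\theta^d$ and $\Delta\phi\simeq\phi^d$ much faster than the outer positioning loop. Substituting \eqref{eq:final_x_control} into \eqref{eq:deltax} (with the gravity scaling absorbed into $K_{x1},K_{x2}$, as the controller statement already does) and cancelling the $\sum_i \gamma_i \xi_i^x$ feedforward against the corresponding plant term, the $x$-channel closed-loop error dynamics reduce to
\begin{equation*}
\ddot{e}_x + (K_{x1}+K_{x2})\,\dot{e}_x + (1+K_{x1}K_{x2})\, e_x = 0.
\end{equation*}

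Second, following Appendix~A verbatim, I would take the backstepping Lyapunov candidate $V_x = \frac{1}{2}\|e_x\|^2 + \frac{1}{2}\|\dot{e}_x + K_{x1} e_x\|^2$, differentiate it along the above error dynamics, and by direct expansion collect the cross-terms into the clean form $\dot{V}_x = -K_{x1}\|e_x\|^2 - K_{x2}\|\dot{e}_x + K_{x1} e_x\|^2 \le 0$. LaSalle's invariance principle applied to the set $\{\dot{V}_x=0\}$, which reduces to $\{e_x=0,\ \dot{e}_x=0\}$, then yields $e_x\to 0$. The identical computation, with $(\phi,K_{y1},K_{y2},\xi_i^y,e_y)$ replacing $(\theta,K_{x1},K_{x2},\xi_i^x,e_x)$, delivers $e_y\to 0$; I read the $e_x$ in the second term of \eqref{eq:final_y_control} as a typographical error for $e_y$, which is required for this symmetric argument to close.

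The main obstacle is the timescale-separation step: the cascaded stability argument is only rigorous once one separately verifies that the attitude inner loop tracks $\theta^d,\phi^d$ uniformly faster than the positioning dynamics and does not excite the elastic rod deflections beyond the static estimates \eqref{eq:flexible_bending} used to evaluate $\xi_i^x,\xi_i^y$. A secondary subtlety is that the $x$--$z$ and $y$--$z$ channels are not fully decoupled in \eqref{eq:deltax}--\eqref{eq:deltaz}, since each $\xi_i^x,\xi_i^y$ depends on $\Delta T_i$, which in turn was selected by the altitude loop to drive $e_z$ to zero; strictly speaking one should therefore argue stability of the coupled $(e_x,e_y,e_z)$-system using a combined Lyapunov function $V = V_x + V_y + V_z$ and dominate the coupling terms by the decay of the faster altitude loop.
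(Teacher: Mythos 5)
Your proposal is correct and follows essentially the same route as the paper: the paper's proof of this theorem is literally ``Same as Theorem 1,'' i.e.\ the Appendix~A backstepping argument with the Lyapunov function $V = \tfrac{1}{2}\|e\|^2 + \tfrac{1}{2}\|\dot{e}+K_{1}e\|^2$ applied channel-wise, which is exactly what you do for $e_x$ and $e_y$. Your additional remarks --- that the $e_x$ in \eqref{eq:final_y_control} should read $e_y$, that the argument implicitly relies on the inner attitude loop achieving $\Delta\theta\simeq\theta^d$, $\Delta\phi\simeq\phi^d$, and that the $\xi_i$ terms couple the channels through the already-fixed $\Delta T_i$ --- are valid observations that the paper leaves unaddressed rather than deviations from its method.
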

\begin{proof}
Same as Theorem 1.
\end{proof}
\subsection{Attitude Controller}
The development of the attitude controller is subject to the following assumptions

{\bf Assumption 1} The flexibility effects on the system dynamics are small and terms like $\delta_i^z \sin(\gamma_i)$ can safely be neglected.

{\bf Assumption 2} 
The yaw-torques induced by matrix $\Psi$ in \eqref{eq:flex_pos_exp} in the $x$- and $y$-axes can be neglected and $\Psi \simeq \left[ \begin{array}{ccc} 0 & \ldots & 0\\0 & \ldots & 0\\ 1 & \ldots & 1\end{array} \right]$. Essentially this is an indication of the relative lengths of the rods compared to the negligible flexibility effects since $\delta_i^z, \gamma_i \simeq 0$.  

Given these assumptions and for small $\gamma_i$, \eqref{eq:flex_rot_exp} is transformed to:
\begin{eqnarray}
\mathbf{J}{\Dot{\Omega}} &=& -\Omega \times \left( \mathbf{J}\Omega\right) + \boldsymbol{\tau}^c
 + \boldsymbol{\tau}^s 
 = 
-\Omega \times \left( \mathbf{J}\Omega\right) + 
\left[
\begin{array}{c} \tau_x^c\\\tau_y^c\\\tau_z^c \end{array}\right]
 + \boldsymbol{\tau}^s, \label{eq:flex_rot_exp_2}\\
 &=&
 -\Omega \times \left( \mathbf{J}\Omega\right) + 
 \left[\begin{array}{c|c|c|c}
^sc_{0,y} & \ldots & ^sc_{{n-1},y} & 0\\
-{}^sc_{0,x} & \dots & -{}^sc_{{n-1},x} & 0\\ \hline
\zeta_0 & \dots & \zeta_{n-1} & 1
\end{array}  \right] 
\left[ 
\begin{array}{c}
T_0 \\ \vdots \\ T_{n-1} \\  \hline
\sum_{i=0}^{n-1} M_i
\end{array} 
\right] + \boldsymbol{\tau}^s\label{eq:control_vector_def},
\end{eqnarray}
where $\zeta_i = \gamma_i \left[^sc_{i,y} c(\alpha_i)- {}^sc_{i,x} s(\alpha_i)\right]$;
the top-two rows of~\eqref{eq:control_vector_def} is the $(x,y)$ thrust-to-torque allocation matrix. 

The objective of the attitude controller is to compute the control vector $\boldsymbol{\tau}^c$ in order to regulate the attitude dynamics.
Subsequently the thrust optimizer assigns the individual thrusts $T_i$.
while the total yaw moment is computed from~\eqref{eq:control_vector_def} as:
\begin{equation}
\sum_{i=0}^{n-1} M_i = \tau^c_{z} - \sum_{i=0}^{n-1} \zeta_i T_i.
\label{eq:total_yaw}
\end{equation}

Let the real moments of inertia and static torques acting on a given structure be constants, then the torque control vector is defined as
\begin{equation}
\boldsymbol{\tau}^c = \Omega \times \left(\mathbf{J}\cdot \Omega\right) + \hat{\mathbf{J}}\left( -K_{\phi}(\mathbf{z}_{\phi}-K_{\phi}\mathbf{e}_{\phi})-\mathbf{e}_{\phi}-K_{\omega}\mathbf{z}_{\phi}\right) - \hat{\tau}^s, \label{eq:main_control}
\end{equation}
where $\hat{\mathbf{J}}$ and $\hat{\tau}^s$ represent adaptations for the unknown moment of inertia matrix and static torque vector, $K_\phi, K_\omega$ are diagonal positive gain matrices and $\mathbf{e}_{\phi}, \mathbf{z}_{\phi}$ are error vectors defined in Appendix C.
\begin{theorem}
The controller \eqref{eq:main_control} stabilizes the system dynamics \eqref{eq:flex_rot_exp_2} for small deflections.
\end{theorem}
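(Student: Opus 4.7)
The plan is to complete the adaptive backstepping argument that the control law \eqref{eq:main_control} has been constructed around, closed by a composite Lyapunov function that absorbs both tracking and parameter-estimation errors. Taking $\mathbf{e}_\phi = {}^E\mathbf{\Gamma}_c - {}^E\mathbf{\Gamma}_c^{d}$ and noting that near hover the Euler-rate Jacobian is approximately identity, the outer-loop kinematics read $\dot{\mathbf{e}}_\phi \simeq \Omega - {}^E\dot{\mathbf{\Gamma}}_c^{d}$. Selecting the virtual stabilizer $\Omega^{d} = -K_\phi \mathbf{e}_\phi + {}^E\dot{\mathbf{\Gamma}}_c^{d}$ and the backstepping variable $\mathbf{z}_\phi = \Omega - \Omega^{d}$ (matching Appendix~C) yields $\dot{\mathbf{e}}_\phi = -K_\phi \mathbf{e}_\phi + \mathbf{z}_\phi$. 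Substituting \eqref{eq:flex_rot_exp_2} and \eqref{eq:main_control} into the expression for $\mathbf{J}\dot{\mathbf{z}}_\phi$, the gyroscopic term $\Omega \times (\mathbf{J}\Omega)$ cancels by design, and with $\tilde{\mathbf{J}} = \mathbf{J} - \hat{\mathbf{J}}$, $\tilde{\boldsymbol{\tau}}^s = \boldsymbol{\tau}^s - \hat{\boldsymbol{\tau}}^s$, the inner-loop equation collapses to $\mathbf{J}\dot{\mathbf{z}}_\phi = -\mathbf{J}\mathbf{e}_\phi - \mathbf{J}K_\omega\mathbf{z}_\phi - \tilde{\mathbf{J}}\mathbf{u} + \tilde{\boldsymbol{\tau}}^s$, where $\mathbf{u}$ denotes the bracketed expression in \eqref{eq:main_control}. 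Since $\mathbf{J}$ is diagonal, $\tilde{\mathbf{J}}\mathbf{u} = \mathrm{diag}(\mathbf{u})\,\tilde{\boldsymbol{\vartheta}}_J$ with $\tilde{\boldsymbol{\vartheta}}_J = [\tilde{I}_x,\tilde{I}_y,\tilde{I}_z]^T$, exposing a clean linear-in-parameters regressor.

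Next I would propose the composite Lyapunov function
\begin{equation*}
V_\phi = \tfrac{1}{2}\mathbf{e}_\phi^T \mathbf{J}\mathbf{e}_\phi + \tfrac{1}{2}\mathbf{z}_\phi^T \mathbf{J}\mathbf{z}_\phi + \tfrac{1}{2}\tilde{\boldsymbol{\vartheta}}_J^T \Gamma_J^{-1}\tilde{\boldsymbol{\vartheta}}_J + \tfrac{1}{2}\tilde{\boldsymbol{\tau}}^{s\,T}\Gamma_\tau^{-1}\tilde{\boldsymbol{\tau}}^s,
\end{equation*}
with positive-definite adaptation gains $\Gamma_J,\Gamma_\tau$. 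Symmetry of $\mathbf{J}$ makes the cross term $\mathbf{e}_\phi^T \mathbf{J}\mathbf{z}_\phi$ cancel against $\mathbf{z}_\phi^T \mathbf{J}\mathbf{e}_\phi$ in $\dot V_\phi$, and the indefinite parameter-error contributions are annihilated by the updates $\dot{\hat{\boldsymbol{\vartheta}}}_J = -\Gamma_J\,\mathrm{diag}(\mathbf{u})\,\mathbf{z}_\phi$ and $\dot{\hat{\boldsymbol{\tau}}}^s = \Gamma_\tau \mathbf{z}_\phi$, leaving $\dot V_\phi = -\mathbf{e}_\phi^T \mathbf{J}K_\phi \mathbf{e}_\phi - \mathbf{z}_\phi^T \mathbf{J}K_\omega \mathbf{z}_\phi \le 0$. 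Boundedness of all closed-loop signals followed by Barbalat's lemma then yields $\mathbf{e}_\phi,\mathbf{z}_\phi \to 0$, establishing attitude stabilization.

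The hard part will be honouring the ``for small deflections'' qualifier rigorously. Assumptions~1 and~2 replace the full attitude allocation by its deflection-free form, so the $O(\gamma_i,\delta_i^z)$ pieces dropped from $\Xi$ and $\Psi$ return in the $\mathbf{z}_\phi$-dynamics as a bounded matched disturbance $\mathbf{d}(t)$ whose norm is governed by the static bending estimates \eqref{eq:flexible_bending}. Propagating $\mathbf{d}$ through $\dot V_\phi$ produces an extra $\mathbf{z}_\phi^T \mathbf{d}$ contribution; a completion-of-squares bound against the quadratic damping term then delivers uniform ultimate boundedness of $(\mathbf{e}_\phi,\mathbf{z}_\phi)$ with an ultimate ball whose radius shrinks as the deflection angles shrink --- exactly what the theorem's qualifier appears to promise. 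A secondary subtlety is that \eqref{eq:main_control} cancels $\Omega\times(\mathbf{J}\Omega)$ using the \emph{true} inertia; if in practice only $\hat{\mathbf{J}}$ is available, the residual $\Omega\times(\tilde{\mathbf{J}}\Omega)$ is itself linear in $\tilde{\boldsymbol{\vartheta}}_J$ and can be folded into the regressor $\mathbf{u}$ without altering the structure of the argument above.
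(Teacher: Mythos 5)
Your argument is correct and follows the same adaptive-backstepping skeleton as the paper's Appendix~C (same error variables $\mathbf{e}_{\phi}$, $\mathbf{z}_{\phi}$, same cancellation of the gyroscopic term, same composite-Lyapunov closure), but the two executions differ in ways worth recording. The paper's Lyapunov function \eqref{eq:total_attitude_lyapunov} leaves the tracking terms unweighted ($\tfrac{1}{2}\mathbf{e}_{\phi}^T\mathbf{e}_{\phi}+\tfrac{1}{2}\mathbf{z}_{\phi}^T\mathbf{z}_{\phi}$); the price is that $\mathbf{J}^{-1}$ then enters the $\mathbf{z}_{\phi}$-dynamics, so the inertia mismatch must be parametrized multiplicatively as $\tilde{E}=I_3-\mathbf{J}^{-1}\hat{\mathbf{J}}$ and absorbed through the trace term $\tfrac{1}{2}\mbox{tr}\left(\tilde{E}^T\mathbf{J}^T\Lambda^{-1}\tilde{E}\right)$, yielding the full-matrix update \eqref{eq:inertia_adapt}. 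You instead weight both tracking terms by $\mathbf{J}$, which keeps the mismatch additive ($\tilde{\mathbf{J}}=\mathbf{J}-\hat{\mathbf{J}}$) and, because $\mathbf{J}$ is diagonal, lets you collapse it to a three-parameter vector regressor with a correspondingly diagonal update; your static-torque update coincides with \eqref{eq:static_adapt} up to the gain. Your closed-loop identity $\mathbf{J}\dot{\mathbf{z}}_{\phi}=-\mathbf{J}\mathbf{e}_{\phi}-\mathbf{J}K_{\omega}\mathbf{z}_{\phi}-\tilde{\mathbf{J}}\mathbf{u}+\tilde{\boldsymbol{\tau}}^s$ checks out, and both routes produce the same negative-semidefinite $\dot V$ (yours with $\mathbf{J}K_{\phi}$, $\mathbf{J}K_{\omega}$ in place of $K_{\phi}$, $K_{\omega}$). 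Note only that the theorem implicitly refers to the controller closed with the paper's adaptation laws \eqref{eq:inertia_adapt}--\eqref{eq:static_adapt}, so strictly you have verified a (diagonal) variant of that adaptive law rather than the one printed. Where you genuinely go beyond the paper is the ``small deflections'' qualifier: Appendix~C simply invokes Assumptions~1--2 and proves asymptotic stability of the idealized dynamics, whereas you retain the dropped $O(\gamma_i,\delta_i^z)$ terms as a matched disturbance and argue ultimate boundedness, which is the more honest reading of the statement. One caveat there: under a persistent disturbance the pure integral adaptations admit parameter drift, so your completion-of-squares bound controls $(\mathbf{e}_{\phi},\mathbf{z}_{\phi})$ but not the estimates; a projection or $\sigma$-modification would be needed to make that uniform-ultimate-boundedness claim airtight.
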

\begin{proof}
See Appendix C.
\end{proof}

\subsection{Individual copter Thrust Computation} 
Having computed the $\tau_x^c$ and $\tau_y^c$ and the desired total thrust 
\[
T^d \triangleq \sum_{i=0}^{n-1}T_i = \sum_{i=0}^{n-1} \left( T_i^{\circ}+\Delta T_i \right),
\]
the individual agent thrusts $T_i,~i=0,\ldots,n-1$, and yaw torques need to be provided. The thrusts are related through the allocation matrix equation
\begin{eqnarray}
\left[ \begin{matrix}
\tau^c_{x} \\ \tau^c_{y} \\ T^d
\end{matrix} \right] 
\hspace*{-1mm}&=&\hspace*{-0.2mm}
\sum_{i=0}^{n-1} \left\{
\left( \left[
\begin{matrix}
^sc_{i,x} \\ ^sc_{i,y} \\ 0
\end{matrix} \right] \times \left[
\begin{matrix}
0 \\ 0 \\ T_i
\end{matrix} \right] \right)+ \left[
\begin{matrix}
0 \\ 0 \\ T_i
\end{matrix} \right] \right\}
=
\mathbf{\Gamma} \mathbf{T}= \mathbf{\Gamma} \left[ \begin{array}{c} T_0 \\ \vdots\\T_{n-1}\end{array}\right].
\label{eq:allocation_matrix_equation}
\end{eqnarray}

For multi-copter systems, where $n > 3$, \eqref{eq:allocation_matrix_equation} has infinite solutions in computing $\mathbf{T}$. In most works, the pseudo-inverse is used~\cite{ADGS2012}. Inhere, an optimization procedure is devised for the computation of $\mathbf{T}$ while satisfying other metric.  

\subsubsection*{Optimization Procedure}
The generic optimization scheme is:
\begin{align}
&\min_{T_0,\dots, T_{n-1}} f \label{eq:optimizer}\\
\mbox{subject to: }& 
\mathbf{\Gamma} \left[ \begin{array}{c} T_0 \\ \vdots\\T_{n-1}\end{array}\right]  = \left[ \begin{matrix}
\boldsymbol{\tau}^c_{x} \\ \boldsymbol{\tau}^c_{y} \\ T^d
\end{matrix} \right],\nonumber\\
&0 \leq T_i \leq T{^{\max}},~ i = 0,\ldots,n-1,\nonumber
\end{align}
where $T^{\max}$ the maximum thrust that can be provided by the agent.
\subsubsection{Flight Time Maximization with Maneuvering Efficiency}
The system's flight time is dictated by the minimum flight time among agents; since in case of an agent's battery depletion the system will need to land. In~\cite{abeywickrama2018comprehensive}, a direct relationship between payload, thrust, and UAV-battery drainage was investigated. Hence maximizing the flight time is equivalent to minimizing the worst case of thrusts among all agents, or
\begin{equation}
f_T = \parallel \left[ T_0,\dots ,T_{n-1}\right]^{\top} \parallel_{\infty}. \label{maximum}
\end{equation}
Other approaches include a quadrant-based thrust selection procedure~\cite{SGLYK2018} and the pseudo-inverse which in general result in a smaller mean thrust value among all agents, while the adoption of the $\infty$-norm results in the lowest maximum thrust among all agents.
%

%
Reducing the system's maneuvering response is equivalent to generating large torques from the agents. Since the $T^{\max}$ is fixed, these torques can be generated by copters away from the system's center of mass~\cite{SGLYK2018,MSMK2013}. This is explained since in~\eqref{eq:allocation_matrix_equation}, copters with larger $^sc_{i,x}, ^sc_{i,y}$ generate the same torques with smaller thrust adjustment. Thus, agents with large values of $\mathbf{\Gamma}(1,i)$ and $\mathbf{\Gamma}(2,i)$ should participate more in the control effort. Since a minimization scheme is employed, then agents with small values of $\mathbf{\Gamma}(1,i)$ and $\mathbf{\Gamma}(2,i)$ should be rewarded. To further emphasize this reward, some agents with low torques are considered inactive and the ones with high torques are further emphasized; variables $\epsilon_x$ and $\epsilon_y$ are introduced for his reason, defined as
\[
\epsilon_x\left(\alpha_{\min},\alpha_{\max}\right)=
\mbox{sat}\left(  \left\lvert\frac{\tau_x^c}{\tau_{x,\max}^c} \right\rvert,\alpha_{\min},\alpha_{\max} \right),
\]
where the saturation function returns $0$ when the normalized input $\left\lvert\frac{\tau_x^c}{\tau_{x,\max}^c} \right\rvert$ drops below $\alpha_{\min}$ and $1$ above $\alpha_{\max},~(0\leq\alpha_{\min}<\alpha_{\max}\leq 1)$ and $\tau_{x,\max}^c$ is the largest anticipated control effort in the $x-$direction. A similar function is defined for the $y$-component, named $\epsilon_y\left(\alpha_{\min},\alpha_{\max}\right)$. Then the maneuvering efficiency metric encapsulating the aforementioned is 
\begin{equation}
f_{M} = \sum_{i=0}^{n-1} \left( \frac{\epsilon_x}{\lvert \mathbf{\Gamma}(1,i) \rvert} + \frac{\epsilon_y}{\lvert \mathbf{\Gamma}(2,i) \rvert} \right) T_i.\label{efficiency}
\end{equation}

A combined version of the previous metrics maximizes the flight time while reducing the response time as 
\begin{equation}
f_{E} = \epsilon f_T + (1 -\epsilon) f_M,~0\leq\epsilon\leq 1.\label{combination}
\end{equation}
This metric utilizes agents further from the centroid, while maneuvering, while optimally distributes the thrust requirements among the agents in near-hovering conditions
\subsubsection{Battery Time Optimization}
The individual agent's battery consumption~\cite{bauersfeld2022range,hwang2018practical} depends on several factors, including the rotor angular velocity, the motor-propeller combination, while frequently ignore aerodynamic effects of rotary-wing aircraft. Blade-element-momentum theory is used for describing these effects while the remaining battery capacity is calculated using Peukert model. The resulting model is quite complicated and rather empirical and results in polynomial methods to express the battery capacity. The adopted battery time metric is formed along this emprirical method and the adopted metric is 
\begin{equation}
f_{B} = \sum_{i=0}^{n-1} \frac{T_i^2}{1 - e^{(\delta - B_i)}} , \label{battery}
\end{equation}
where $B_i$ is the battery voltage reading of agent $i$. The $\delta$-constant is the lowest operational voltage of the battery and depends on the payload and the induced temperature. This metric penalizes agents with battery levels $B_i \simeq \delta$ and assumes that the battery is depleted according to the square of the thrust.

For all metrics, the YALMIP optimization toolbox
was utilized 
to provide a solution to the optimization problem. The typical optimized solution could be found in 5~msec and updated this solution (thrust and yaw moment setpoints) on the controller component every 5~msec.
%
\section{Experimental Studies\label{sec:experimental_studies}}
\subsection{Control Framework Implementation}
In order to validate the proposed controllers, the Crazyflie quadrotors~\cite{giernacki2017crazyflie} vehicles were used, in order to create flying structures
Hexagons and square connecting devices were used, while the structure rods are identical with $l_i=0.14$m. The weight of each rod is $3.5$~g, the hexagon's is $9$~g and the square's is $7$~g. Powerful neodymium magnets were inserted at each rod-end capable of providing a $7$~N attractive force.  
The Crazyflie-ROS stack~\cite{HA2017} is used for communication with all agents and the ground station. 
The control framework is shown in Figure~\ref{fig:control_diagram}.
\begin{figure}[htbp]
    \centering
    \includegraphics[width=\columnwidth]{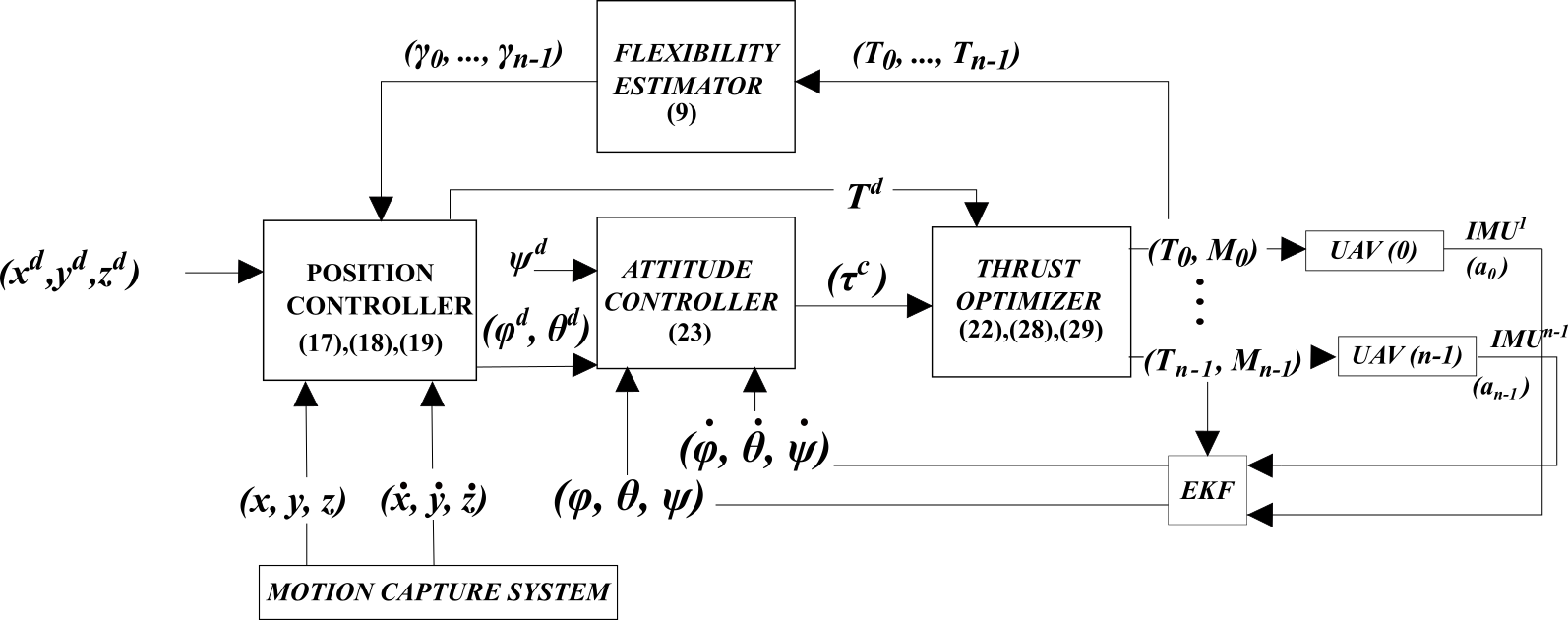}
    \caption{Controller Diagram}
    \label{fig:control_diagram}
\end{figure}

The ground station estimates the orientation of the multi-copter system via an Extended Kalman Filter (EKF).
The prediction step is performed by using the latest control commands, while the attitude update uses measurements received from all agents' IMUs~\cite{LAWCS2013
}. 
The position controller depends on position feedback from a motion capture system at a 120~Hz rate.
Once the controller has computed the attitude control and subsequently the thrust and yaw moment required for each agent
$\left[\phi_i^{d}, \theta_i^{d}, T_i, M_i\right]^{\top}$ is generated. $\phi_i^d$ and $\theta_i^d$ are extracted from $R_{z}(\alpha_i) R_{B}^W R_{z}(\alpha_i)^{\top}$, 
where $R_{B}^W$ is the rotation matrix representation of the current estimate of the multi-copter's attitude maintained at the ground station's EKF and $\alpha_i$ is defined in Section~\ref{sec:copter_lattice}. Indirectly, the rotation of agent-$i$ as seen from its own IMU is used, thus leading to commands $\phi_i^{d}, \theta_i^{d}$ equal to the roll and pitch angle estimates maintained at the on-board of the $i$th agent autopilot. 
The setpoints are transmitted to each Crazyflie every~5~msec, while the commands to the position controller are transmitted every 20~msec.
\subsection{Crazyflie Battery Characterization}
The operational battery voltage characteristics for the Crazyflie quadcopters is depicted in in Figure~\ref{fig:simulation_1}, where $B_i^{\max}=4$Volt and $\delta=2.75$~Volt for 80\% of the maximum allowable payload ($m_p^{\max}=13.125$~g) and $\delta=2.6$~Volt for $m_p=0$~g.

The battery depletion history for two Crazyflie quadcopters in a hovering mode is shown in Figure~\ref{fig:simulation_1}. In all cases, there is a sudden drop from the non-operating voltage, followed by a slow voltage drop leading to a sudden voltage drop before the need to land these quadcopters. The non-carrying quadcopter's batteries lasted until 430~seconds (blue line) in comparison with the loaded agent which lasted 270~seconds. Different $\delta$ values were recorded depending on the payload. The maximum value of all $\delta$s for varying payloads was used; $\delta=2.9$~Volt for $m_p=90$\% of the maximum available carrying payload.
\begin{figure}[htbp]
    \centering
    \hspace*{-0.6cm}
    \includegraphics[keepaspectratio,height=0.55\columnwidth]{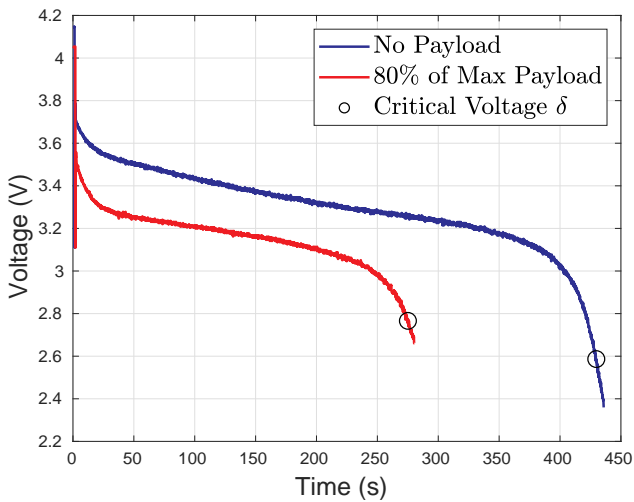}
    \caption{Battery depletion voltage $B_i$ history for various $m_p$.}
    \label{fig:simulation_1}
\end{figure}
\subsection{Flight Experiments with Rigid Copter-Structure}
In this section, experiments are presented, showcasing the efficiency of the proposed control methods in achieving cooperative flight. Cases spanning $n=3,\ldots,6$ copters are presented with short rods which correspond to a rather rigid structure, since $\max \delta_i^z=0.0025$m and $\max \gamma_i=1.37^{\circ}$.

%
\subsubsection{Quad-copter \& T-copter}
A symmetric quad-copter ($n=4$) and an asymmetric T-copter ($n=3$) were configured, as shown in Figure~\ref{fig:square-copter}, where the T-copter is created by removing one rod/copter. The same waypoints were transmitted to both 4-copter and 3-copter systems, which had the same controller employed relied on the pseudoinverse in solving \eqref{eq:allocation_matrix_equation}.
\begin{figure}[htbp]
    \centering
\includegraphics[width=0.6\columnwidth,keepaspectratio]{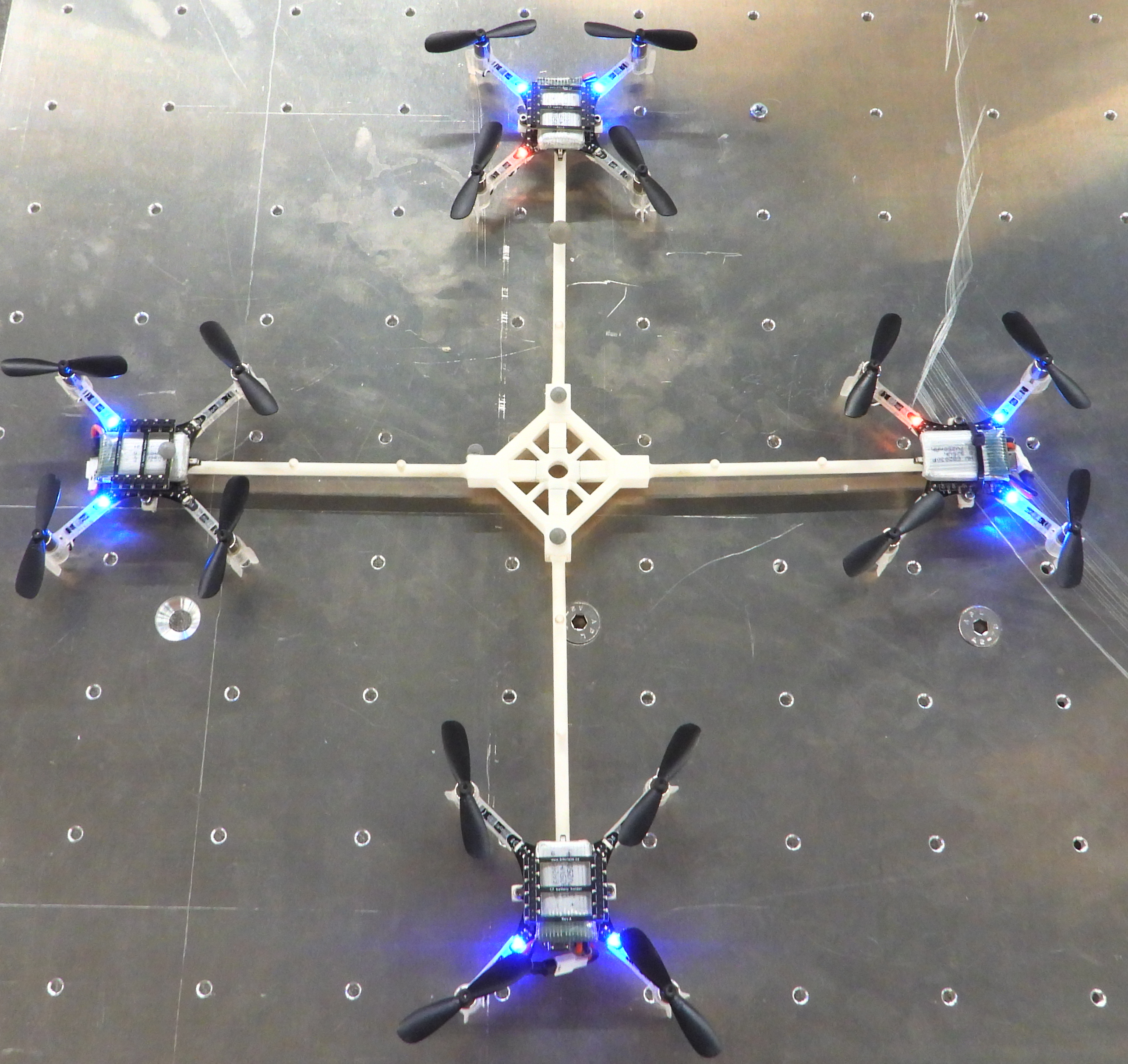}
    \caption{Modular quad-copter configuration.}
    \label{fig:square-copter}
\end{figure}

As shown in Figure~\ref{fig:quad_t-copter_response}, the quadcopter achieves slightly faster transitions and has less turbulence during its take-off phase which is anticipated due to its symmetry. 
\begin{figure}[htbp]
    \centering
    \includegraphics[width=0.8\columnwidth,keepaspectratio]{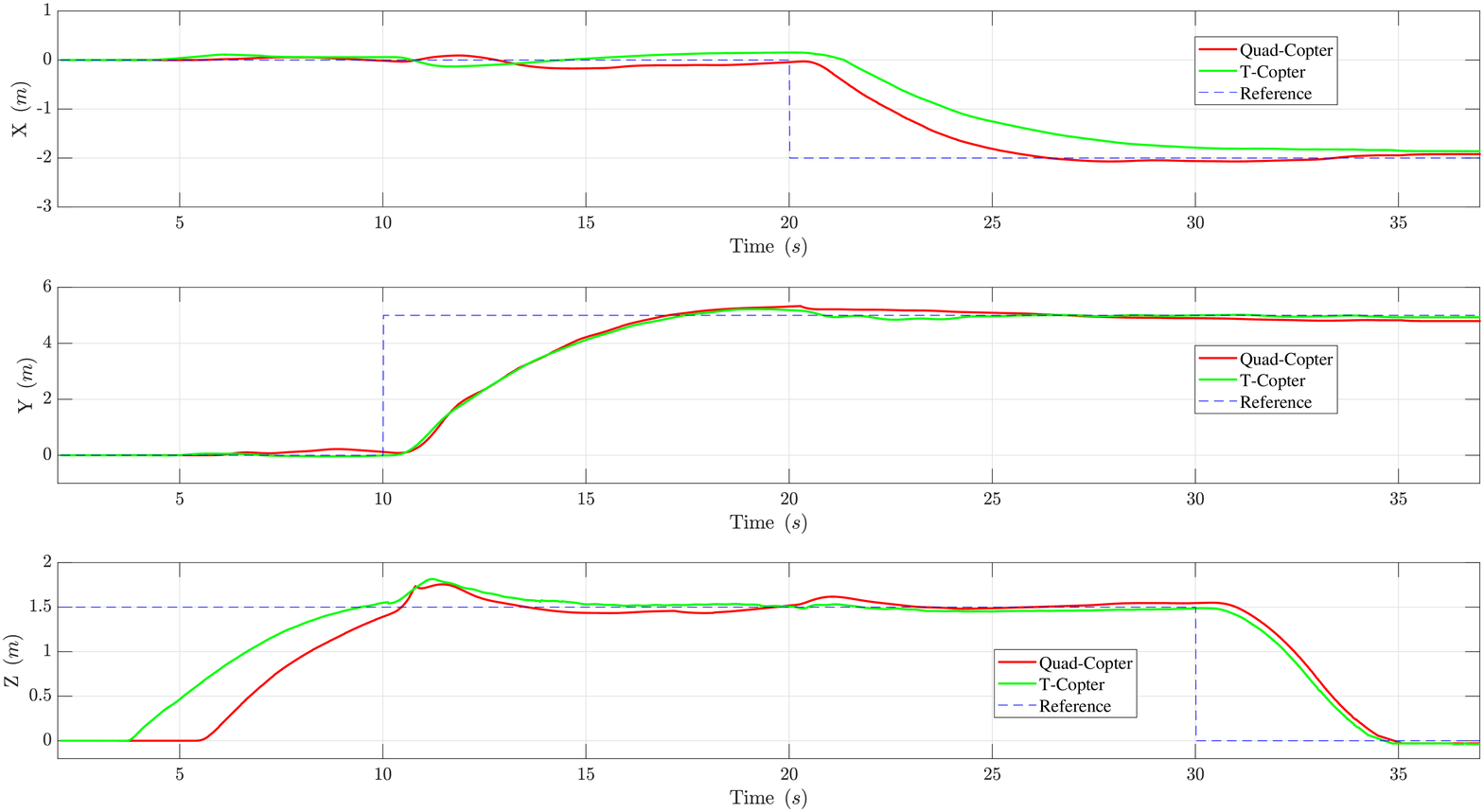}
    \caption{Quad-copter and T-copter responses}
\label{fig:quad_t-copter_response}
\end{figure}

Figure~\ref{fig:t-copter_adaptations} shows the evolution of the $x,y$ static torque and the $x,y$ diagonal elements of the inertia matrix adaptations for the T-copter configuration. In asymmetric configurations, the need to offer adaptations is apparent since there are different converging values between $\hat{J}_{xx}$ and $\hat{J}_{yy}$ and similarly between $\tau_x^s$ and $\tau_y^s$. 
\begin{figure}[htbp]
    \centering
    \includegraphics[width=0.75\columnwidth,keepaspectratio]{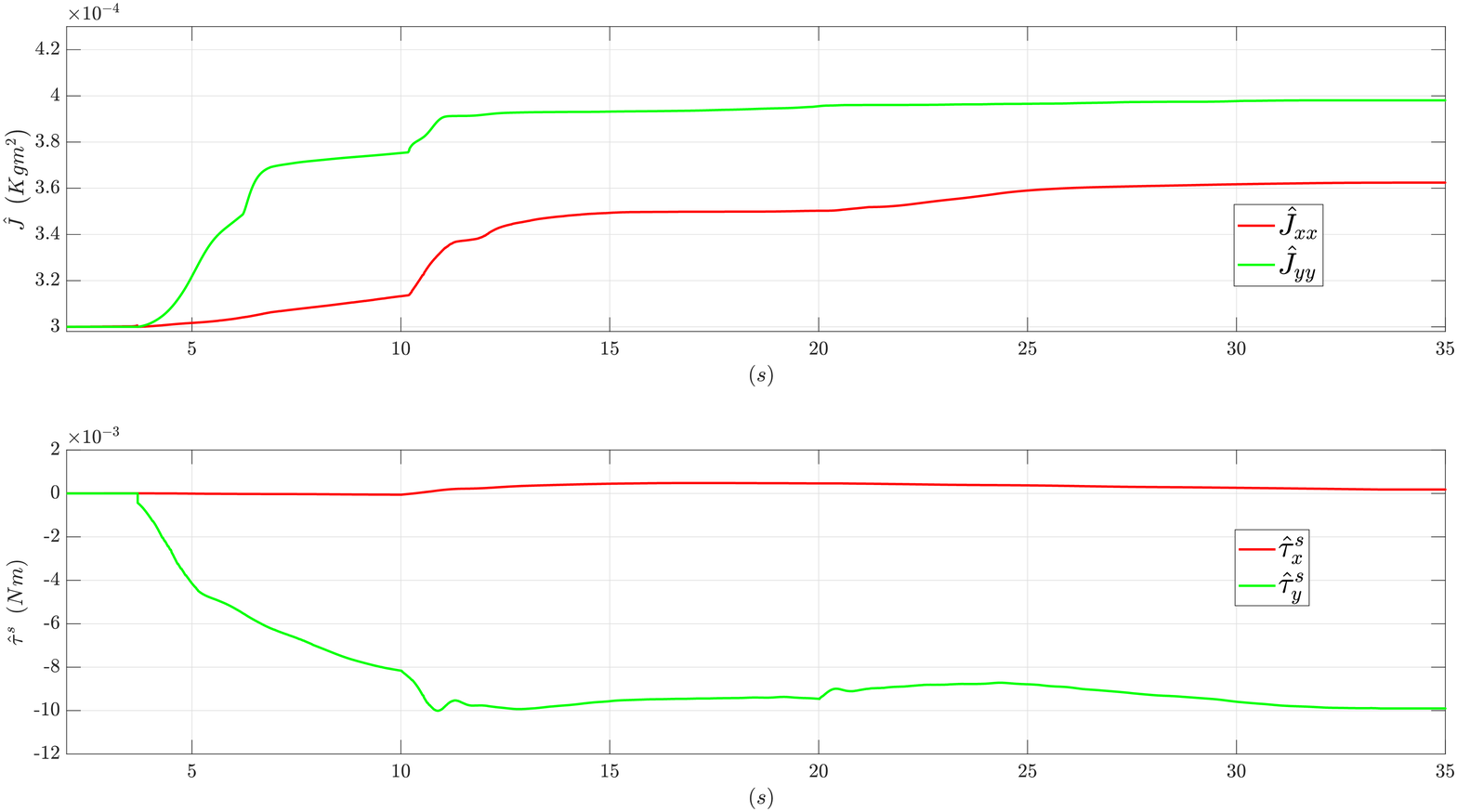}
    \caption{T-copter adaptation evolutions}
\label{fig:t-copter_adaptations}
\end{figure}
\subsubsection{Highly Asymmetric Hexacopter}
The hexacopter of Figure~\ref{fig:copter_lattice}, seen mid-flight in Figure~\ref{fig:flying_hexa}, was configured to test the controller design. The connecting rod between the hexagon to the square element exhibited significant vibrations which affected the system's overall response. The system's center of mass does not lay on any of the connecting elements (rods or polygons).

Moreover, the agents thrust exceeded $70\%$ of their maximum throttle for lifting the structure's weight, 
\begin{figure}[htbp]
    \centering
    \includegraphics[width=0.8\columnwidth]{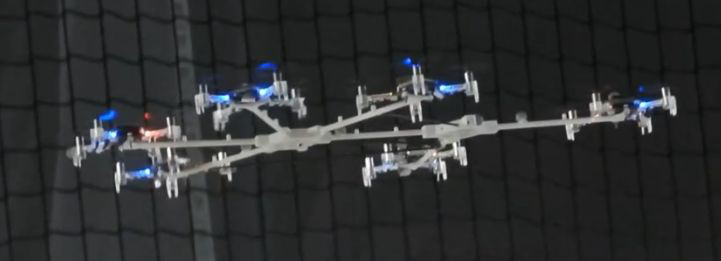}
    \caption{Assymetric Hexa-copter in Mid-Flight.}
    \label{fig:flying_hexa}
\end{figure}

For a similar waypoint navigation, proper flight can be achieved, as seen in Figure~\ref{fig:hexa_hover} (for three waypoints), showcasing the effectiveness of the proposed controller, in flying arbitrary structures.
\begin{figure}[htbp]
    \centering
    \includegraphics[keepaspectratio,width=0.8\columnwidth]{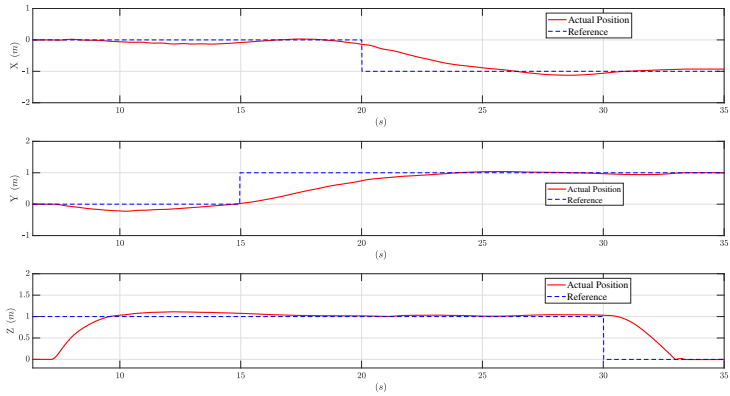}
    \caption{Path following by the hexa-copter configuration.}
    \label{fig:hexa_hover}
\end{figure}

\subsection{Thrust Allocation Comparisons}
A pentacopter shown in Figure~\ref{fig:pentacopter} was created, in order to compare the different thrust control allocation methods proposed and their preferred usage; the copter enumeration is shown in the same Figure. In the subsequent cases, the criterion of maximizing the flight time and decrease the maneuvering response time is used from~\eqref{combination}, where $\epsilon=0.67,~\alpha_{\min}=0.1,~\alpha_{\max}=1$ and $\tau_{x,\max}^c=0.09$Nm,~$\tau_{y,\max}^c=0.09$Nm.
\begin{figure}[htbp]
    \centering
    \includegraphics[width=0.8\columnwidth]{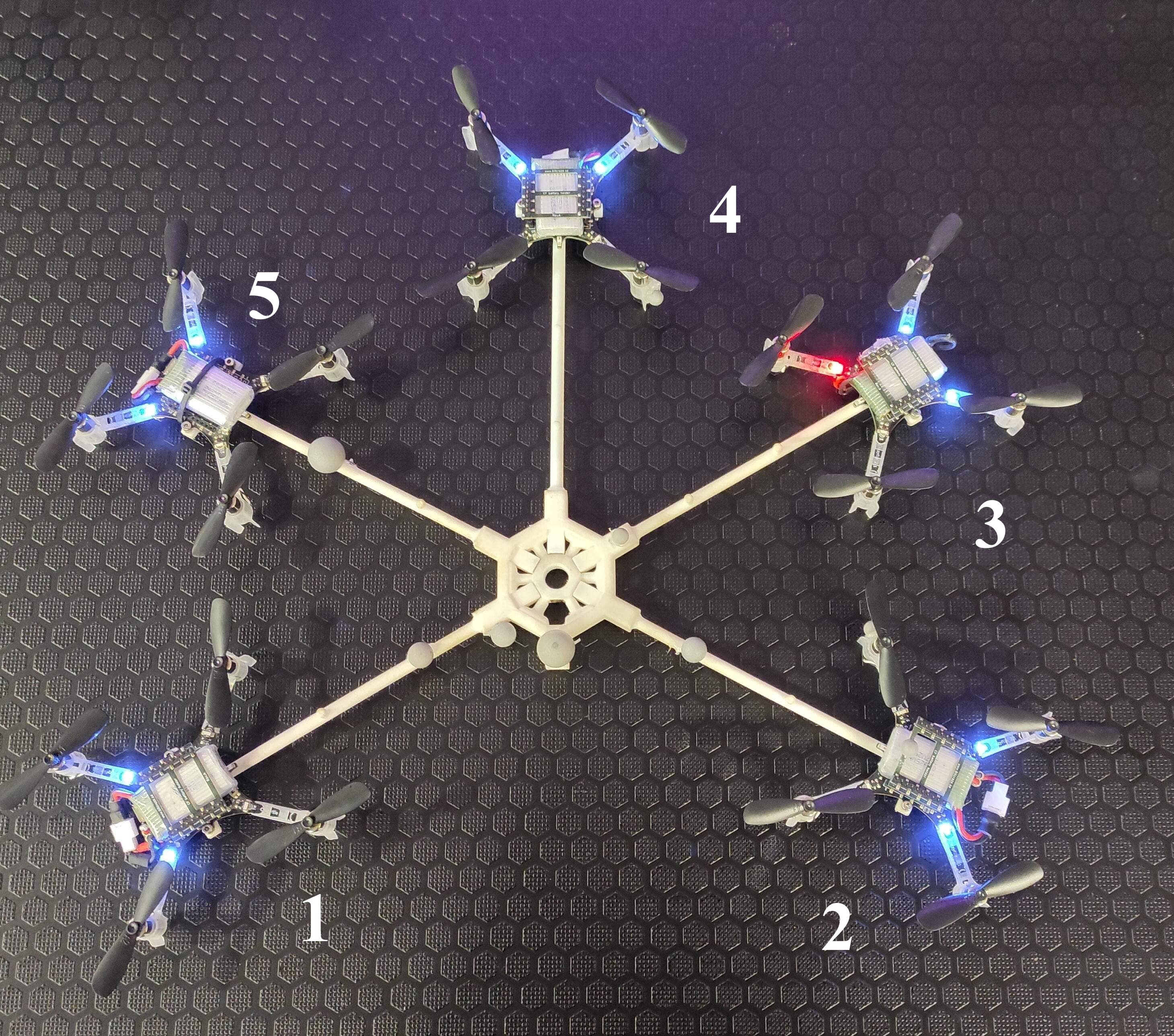}
    \caption{Assymetric Pentacopter System.}
    \label{fig:pentacopter}
\end{figure}

\subsubsection{$f_E$-metric vs. Pseudo-Inverse Response}
The first experiment was conducted to compare the performance of the thrust allocation controller of~\eqref{combination} with that of the simple pseudo-inverse. A series of step maneuvers were commanded to the pentacopter. The achieved flight history is shown in Figure~\ref{fig:penta_flights} and the corresponding thrusts commanded by the controller for each agent are seen in Figure~\ref{fig:penta_thrusts}.
\begin{figure}[htbp]
    \centering
    \includegraphics[keepaspectratio,width=0.8\columnwidth]{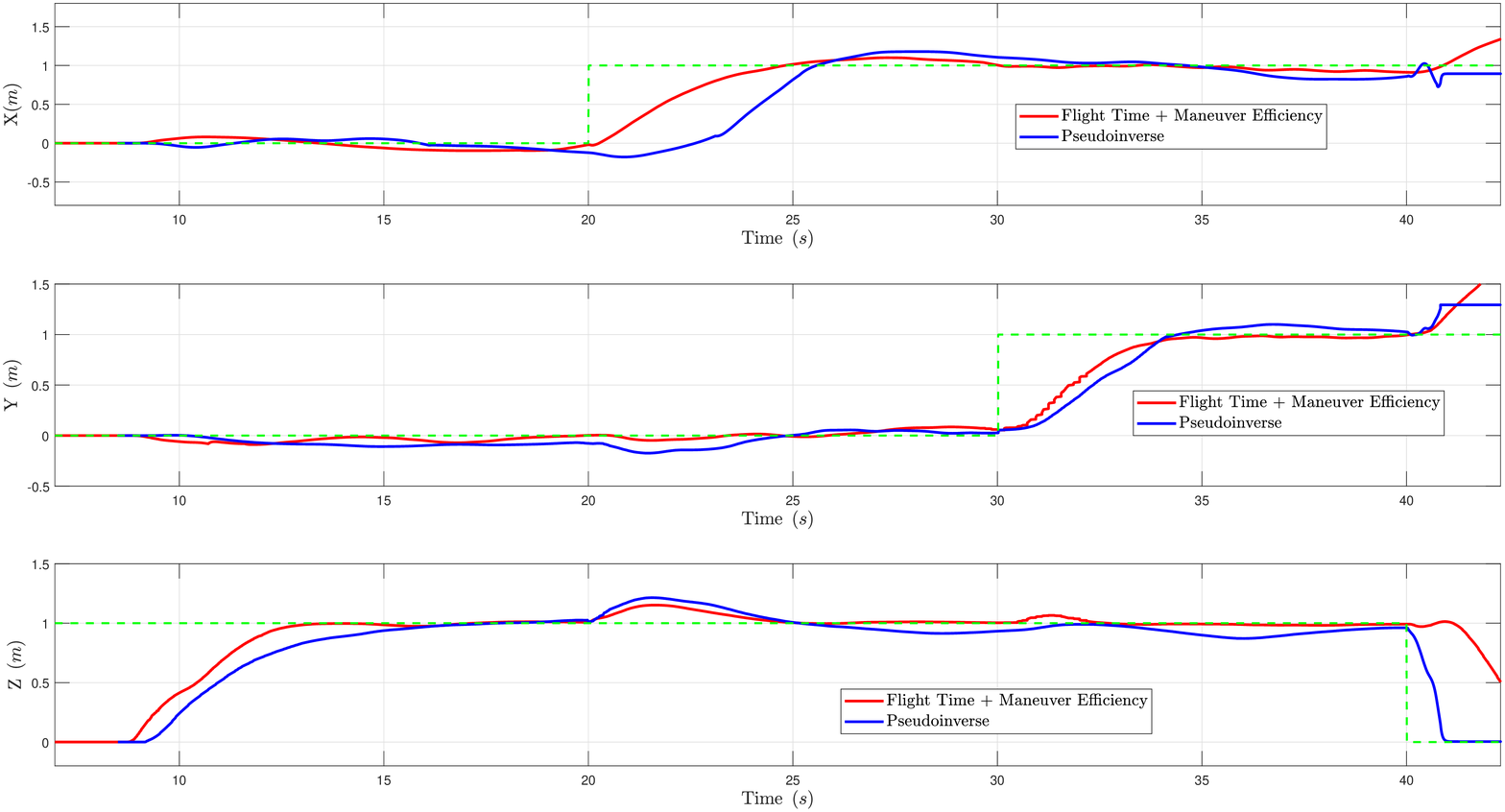} \caption{Waypoint movement comparison for the penta-copter.}
    \label{fig:penta_flights}
\end{figure}

\begin{figure}[htbp]
    \centering
    \hspace*{-0.5cm}
    \includegraphics[keepaspectratio,width=\columnwidth]{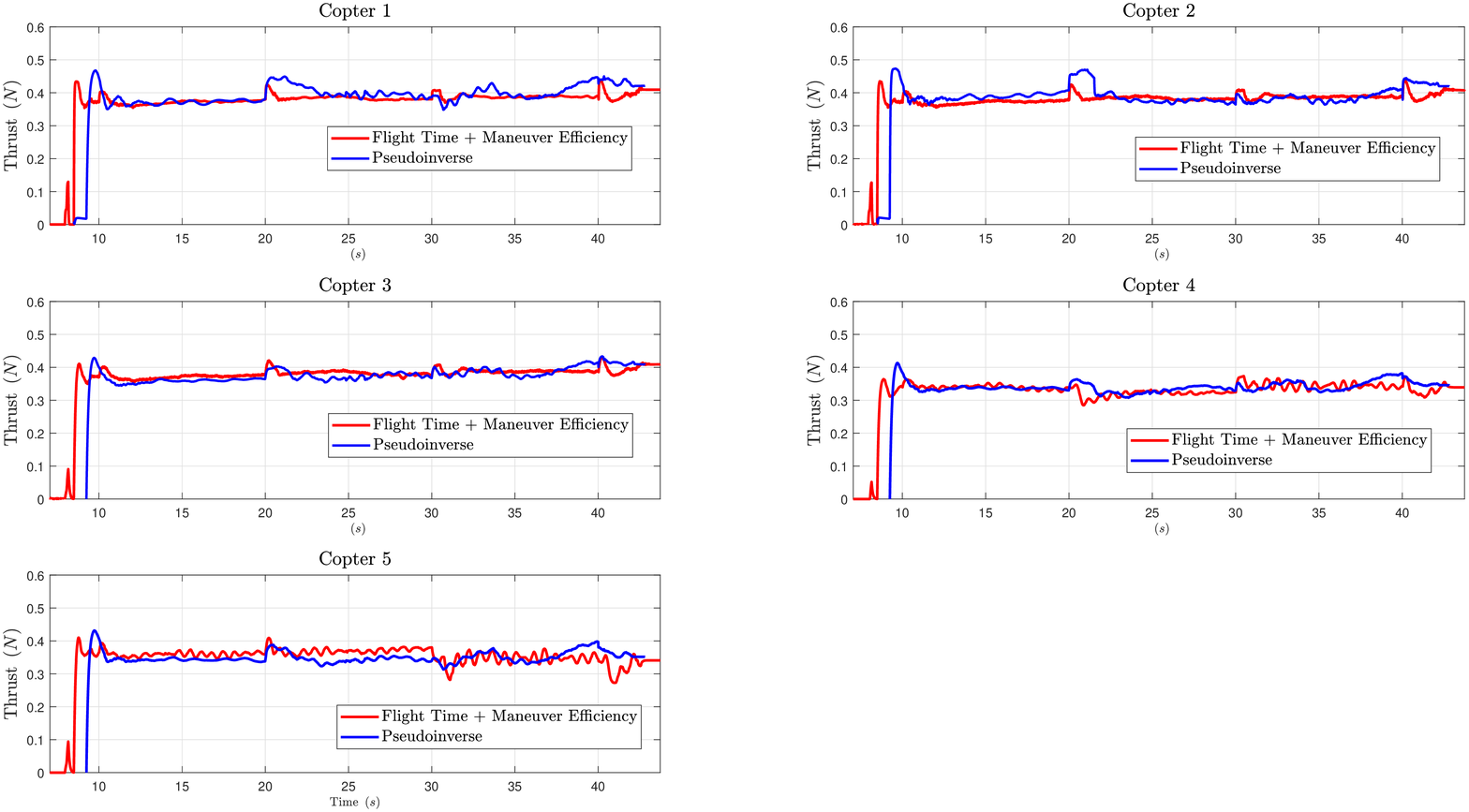} 
    \caption{Agent thrust comparison for the penta-copter.}
    \label{fig:penta_thrusts}
\end{figure}
The proposed $f_E$-minimization controller outperforms the pseudo-inverse, achieving a smoother flight with less oscillations by examining the thrust commands in Figure~\ref{fig:penta_thrusts}, looking into the high frequency thrust component of the copters. For the $f_E$-thrust allocation controller, short thrust excursions are observed during its maneuvers; these occur when $\epsilon_x$ or $\epsilon_y \neq 0$, which corresponds to 4\% of the flight time. Furthermore, the `pseudoinverse'-controller can result in infeasible commands that need to be saturated. 


%
\subsubsection{Battery-Life Optimizer}
The efficiency of the thrust allocation controller of~\eqref{battery} was examined for the penta-copter platform. Initially, all batteries were fully charged at 4.1~Volt, except for the battery on agent $3$, which was depleted at 3.85~Volt. A take-off followed by a hovering experiment was conducted to quantify the proposed optimizer's efficiency compared to the `pseudo-inverse' approach.

The control allocation shown in Figure~\ref{fig:penta_depleted_thrusts} commands on the average 10\% less total thrust from  agent-3 compared to the pseudo-inverse, due to its awareness of the battery voltage level. This is a significant reduction, given that an individual CrazyFlie agent was operating at $59\%$ of its thrust capacity in autonomous hovering.
\begin{figure}[htbp]
    \hspace*{-0.1cm}
    \includegraphics[keepaspectratio,width=\columnwidth]{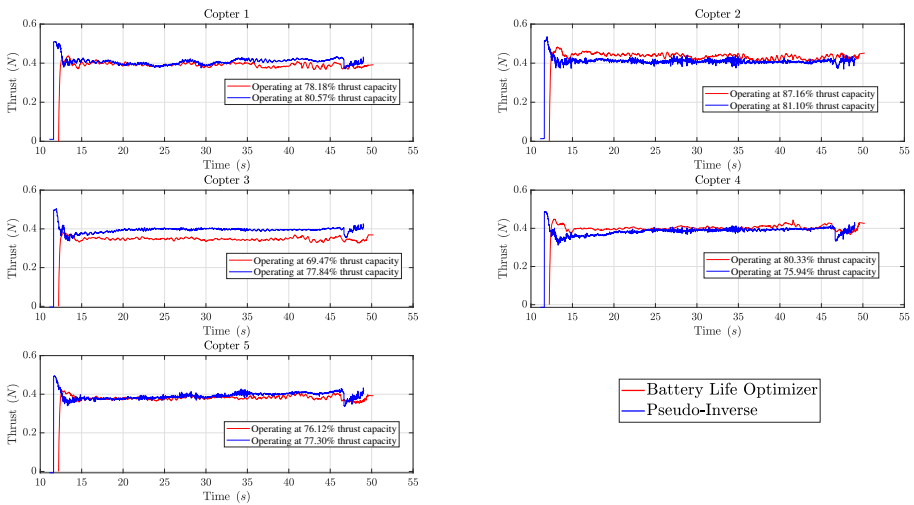}
    \caption{Thrust commands for the agents, during the battery life optimizer scenario.}
    \label{fig:penta_depleted_thrusts}
\end{figure}

The effects of the reduced thrust requirement on agent $3$ are visible in Figure~\ref{fig:penta_depleted_batteries}, where the evolution of the battery voltage readings is shown. For all other agents, except the third one, there is a faster rate of voltage decrease, resulting in small voltages $B_i,~i=1,2,4,5$ after the completion of the experiment.
\begin{figure}[htbp]
    \hspace*{-0.5cm}
    \includegraphics[keepaspectratio,width=\columnwidth]{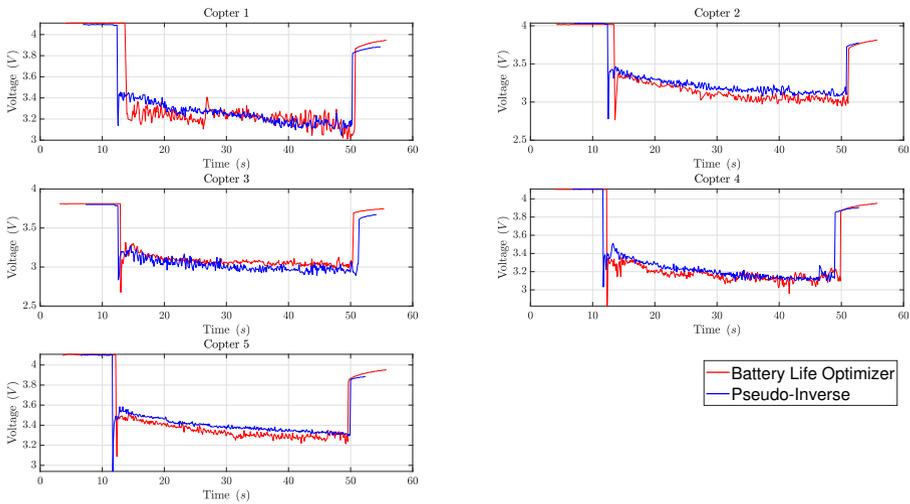}
    \caption{Battery voltages for the agents, during the battery life optimizer scenario.}
    \label{fig:penta_depleted_batteries}
\end{figure}

\subsection{Payload Transportation Experiments}
In this section, experiments are conducted to showcase the usage of the designs in cooperative payload transportation of an asymmetric payload of an L-shape.T- and L-junctions were incorporated to form the structure's skeleton and sturdily connect polygons and carbon rods in a lightweight manner, as shown in Figure~\ref{fig:styrochain}.
\begin{figure}[htbp]
    \centering
    \includegraphics[keepaspectratio,width=0.8\columnwidth]{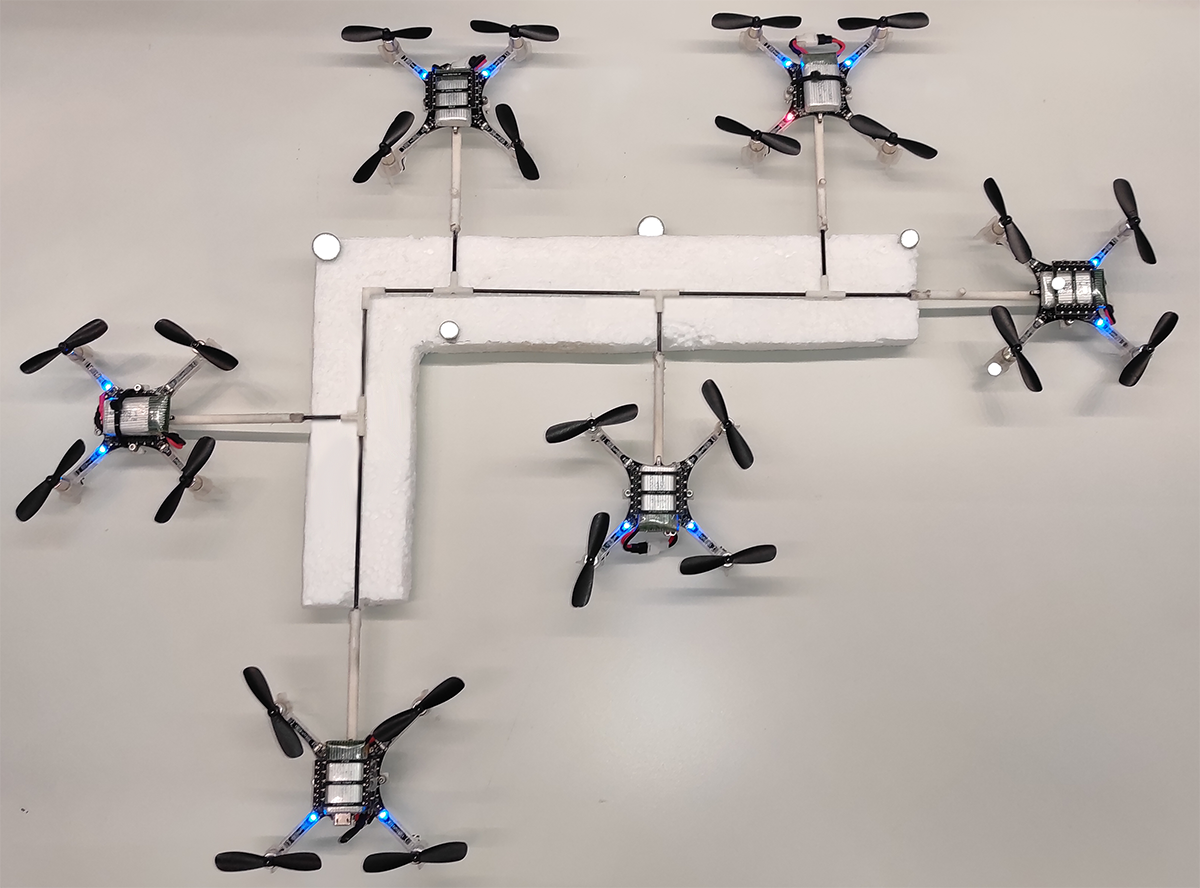}
    \caption{Collaborative payload transportation.}
    \label{fig:styrochain}
\end{figure}

Typical waypoint navigation flight segments for the structure are plotted in Figure~\ref{fig:styrofoam_flight}, along with the thrusts commanded to each agent by the thrust allocation optimization controller in Figure~\ref{fig:styrofoam_thrusts}.
\begin{figure}[htbp]
    \centering
    \includegraphics[keepaspectratio,width=0.8\columnwidth]{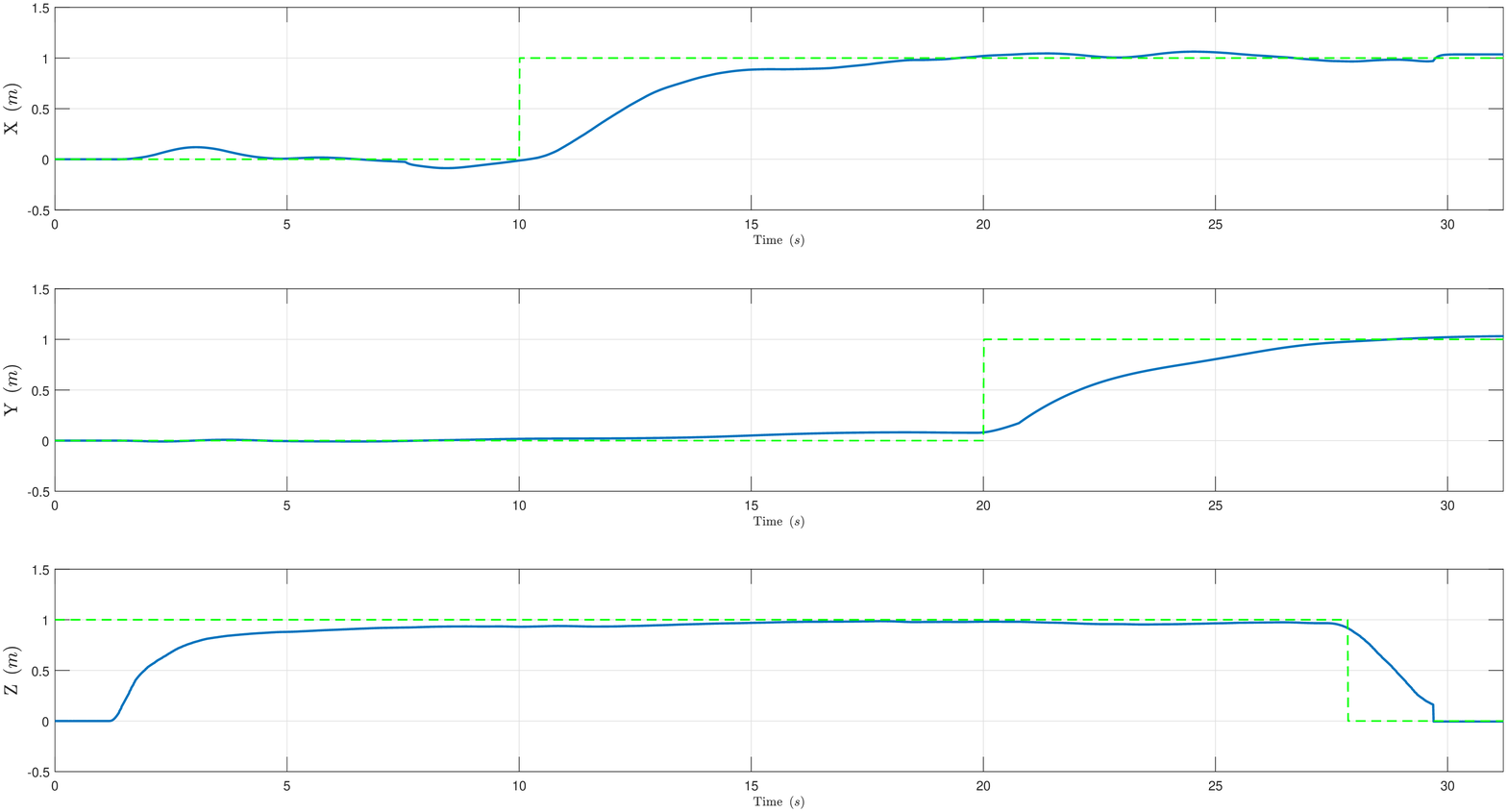}
    \caption{Collaborative payload transportation position-history.}
    \label{fig:styrofoam_flight}
\end{figure}
\begin{figure}[htbp]
    \includegraphics[keepaspectratio,width=\columnwidth]{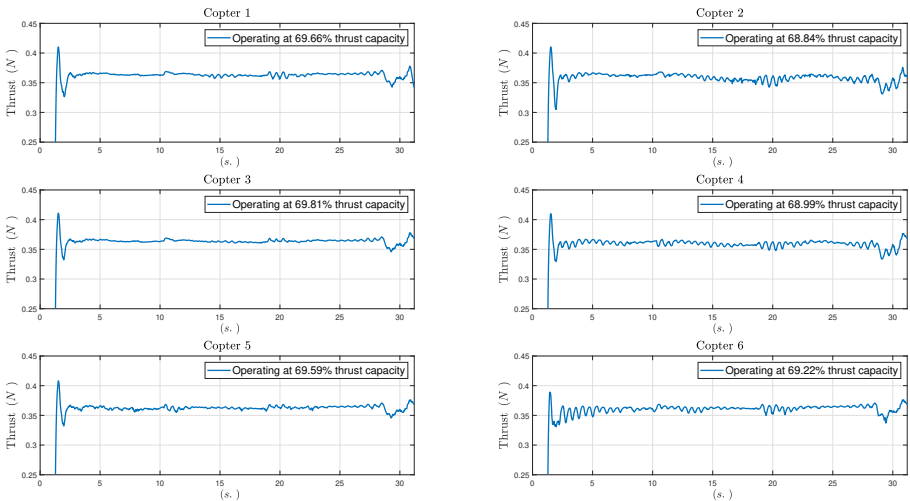}
    \caption{Agent thrusts during collaborative payload transportation.}
    \label{fig:styrofoam_thrusts}
\end{figure}
\subsection{Flight Experiments with Flexible Copter-Structure}
%
A T-copter structure with four copters ($n=4$) is used in this Section; the fourth one is at an elevated\footnote{The developed method is also valid for copters that are parallel-placed at different altitudes} centered position. Two rods of $l_1=l_2=14cm$ are used while the third one has 30~cm length. The long rod resulted in a flexible structure as shown in Figure~\ref{fig:t-copter-long-bend}. Due to the flexibility of the third rod $\delta_3^z=0.02$m and $\gamma_3=5.73^{\circ}$, while the elevated one was at 6~cm vertical distance from the remaining ones. The $f_E$-metric optimizer was used for the thrust allocation controller, while the parameter $\gamma_i$ used in~\eqref{eq:final_x_control} and \eqref{eq:final_y_control} was computed from~\eqref{eq:flexible_bending}.
\begin{figure}[htbp]
    \centering
    \includegraphics[keepaspectratio,width=0.45\columnwidth]{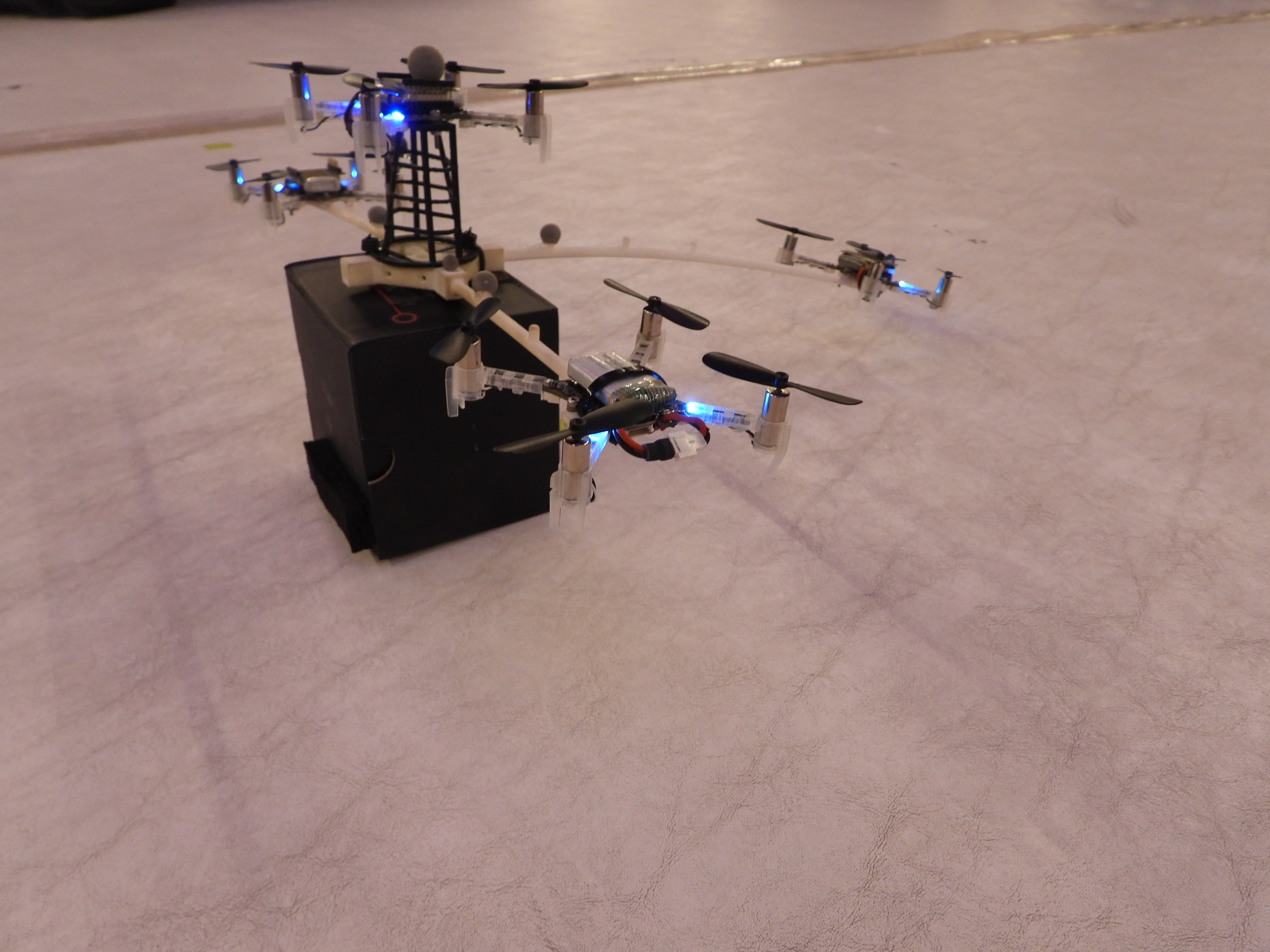}
    \includegraphics[keepaspectratio,width=0.45\columnwidth]{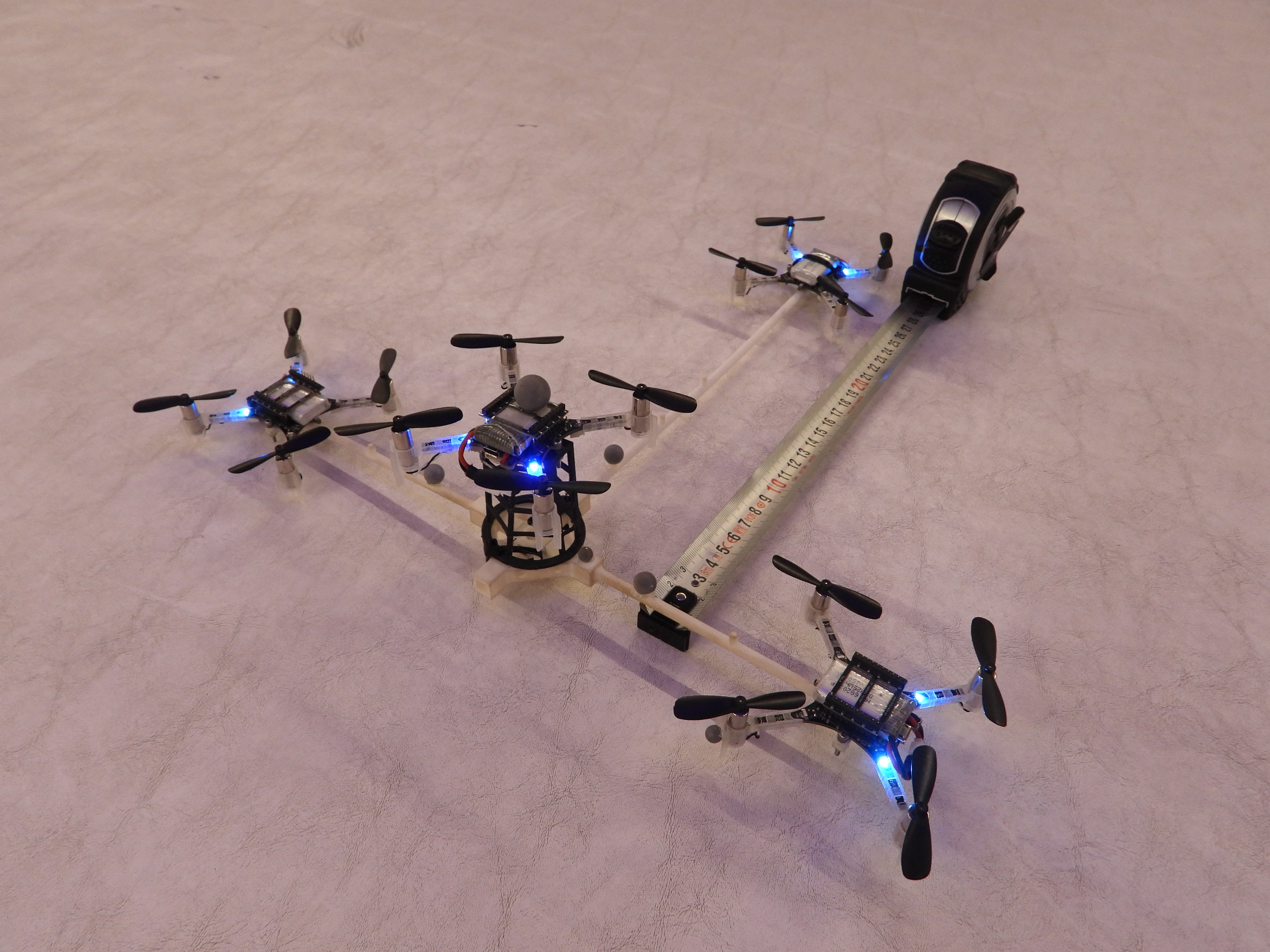}
    \caption{Flexible T-copter ($n=4$) configuration.}
    \label{fig:t-copter-long-bend}
\end{figure}

The static flexibility effects~\eqref{eq:flexible_bending} of the 30-cm clamped free rod was examined for various thrusts. Figure~\ref{fig:bending_comparison} shows the time-history of  $\gamma_3$ angle for the (rod/copter) that had $m_3=$37~g. The applied thrust $T_3$ is shown in a dotted line, while the static~(actual) $\gamma_3$ angle is shown in blue~(red) color. The actual response exhibits significant oscillations due to the unaccounted modes of vibration.
\begin{figure}[htbp]
    \centering
    \includegraphics[keepaspectratio,width=0.6\columnwidth]{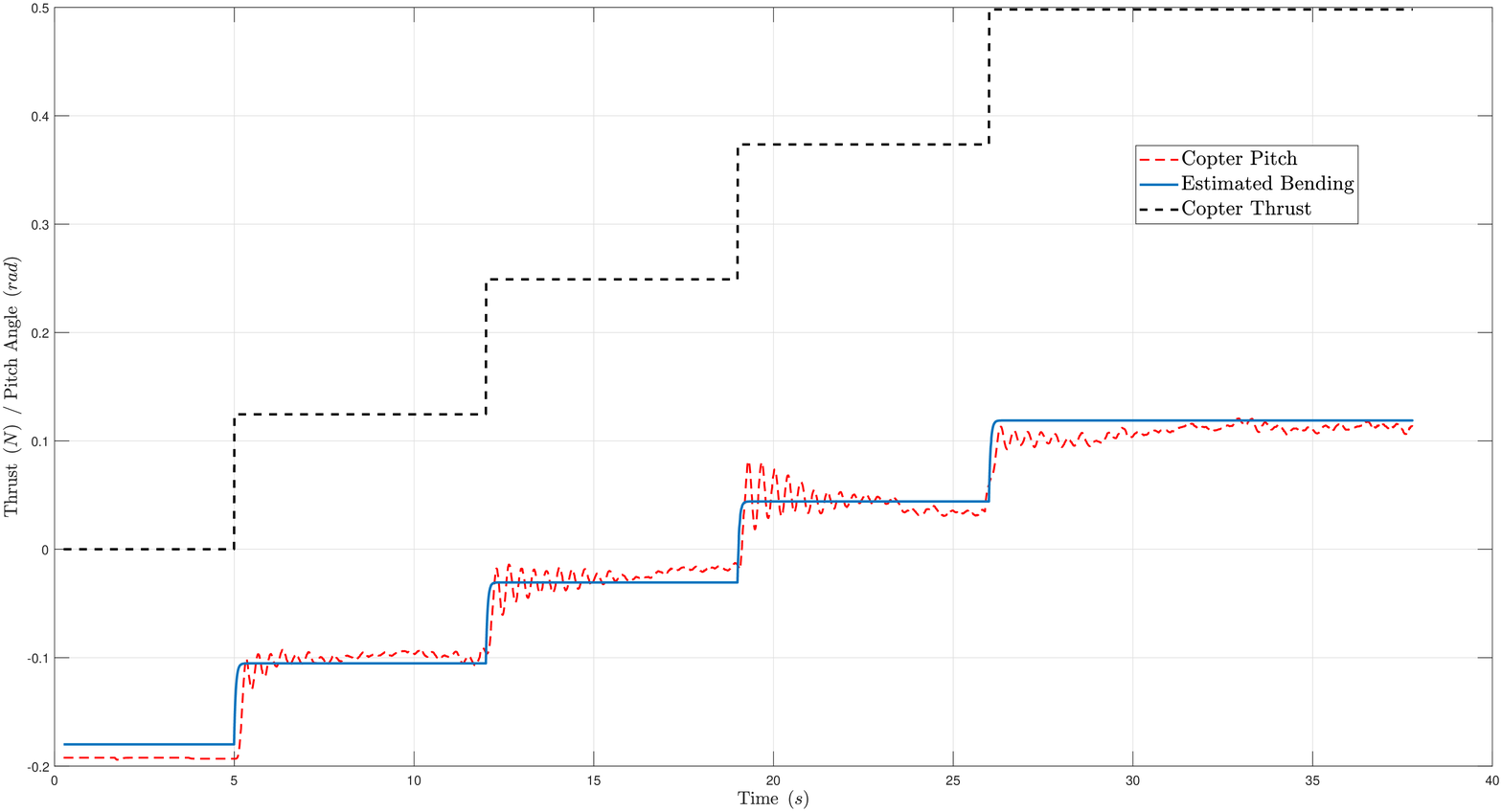}
    \caption{Measured and static bending-angles of elongated rods, under different thrusts.}
    \label{fig:bending_comparison}
\end{figure}

The altitude response of the system appears in Figure~\ref{fig:flex-t-copter}, where there is significant reduction in the system oscillations caused by the system's flexibility.
\begin{figure}[htbp]
    \centering
    \includegraphics[keepaspectratio,width=0.8\columnwidth]{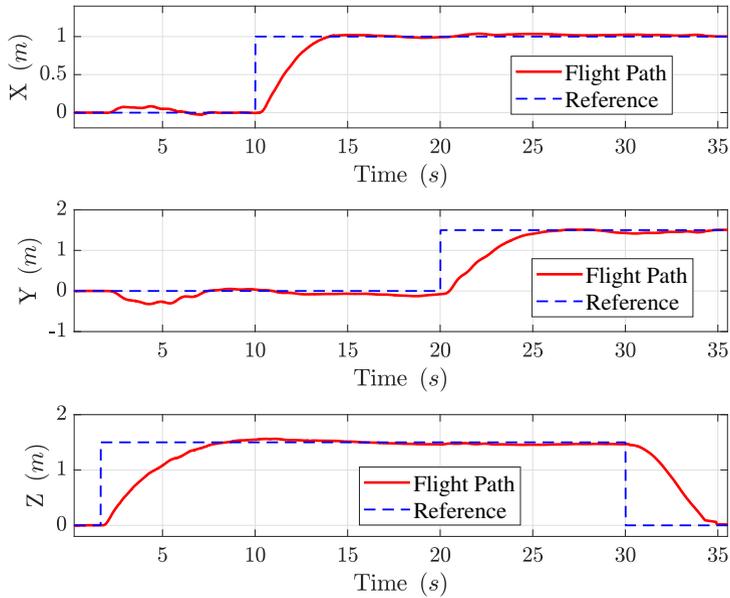}
    \caption{Altitude response of flexible T-copter configuration ($n=4$).}
    \label{fig:flex-t-copter}
\end{figure}

\section{Conclusions}
In this paper, a generalized framework for flying arbitrary modular multi-copter structures was presented. A novel control approach for collaboratively flying such interconnected aerial systems was proposed, relying on the combination of the total thrust produced by each agent, while taking into account the structure's flexibility. The feasibility of the proposed scheme has been experimentally validated using prototype copters and custom designed connecting structure elements.
\clearpage
\section*{Appendix}
\subsection*{Appendix A}
Let $z=z^{\circ}+\Delta z$, and $T=T^{\circ}+\sum_{i=0}^{n-1}\Delta T_i$, where $z^{\circ}$ and $T^{\circ}$ are constants. Then $\dot{z}=\Delta \dot{z}$, and assume the positive definite Lyapunov function $V_1 = \frac{e_z^2}{2}$, where $e_z = z - z^d$ with $z^d$ corresponding to the desired altitude. Then, $
\dot{V}_1 = e_z \dot{e}_z = e_z \left( \Delta\dot{z} - \dot{z}^d \right).$ Let the virtual input $\dot{z}^{*} = \dot{z}^d - K_{z1}e_z$,
and the velocity error term
$
s_z = \Delta\dot{z} - \dot{z}^{*}.
$
Then $\dot{V}_1$ can be rewritten as
$
\dot{V}_1 = e_z \left( s_z - K_{z1}e_z \right).
$
Let the augmented Lyapunov function 
$
V_z = V_1 + \frac{1}{2}s_z^2,
$
then 
\begin{equation}
\dot{V}_z = -K_{z1}e_z^2 + s_z \left( e_z + \Delta\Ddot{z} + K_{z1}(s_z - K_{z1}e_z) \right), \label{eq:V2_dot}
\end{equation}
where it was assumed for simplicity that $\Ddot{z}^d = 0$. 
Given the altitude dynamics~\eqref{eq:deltay}
and the altitude controller~(\ref{eq:final_z_control}) applied to \eqref{eq:V2_dot} results in
$
\dot{V}_z = -K_{z1}e_z^2 - K_{z2}s_z^2 \leq 0.
$
Hence $e_z \rightarrow 0 $, and $s_z \rightarrow 0$, implying that $z \rightarrow z^d$, $\dot{z} \rightarrow \dot{z}^d$, since the system is asymptotically stable under the proposed controller.

Likewise, asymptotic stability of $x$ and $y$ can be proven.

\subsection*{Appendix B}
In the case when there is external payload of unknown mass, the controller needs to be augmented with an online adaptation term for this mass. Assume in~\eqref{eq:final_z_control}, the term $m$ is replaced by its estimate $\hat{m}$, then the Lyapunov function derivative is
\[
\dot{V}_z = -K_{z1}e_z^2 -K_{z2}s_z^2 + \frac{\hat{m}-m}{m} s_z \left( -K_{z1}(s_z - K_{z1}e_z) -e_z - K_{z2}s_z - \sum_{i=0}^{n-1} {\gamma_i} \xi_i^z \right).
\]
Let the positive quantity
\begin{equation}
V_m = \frac{(\hat{m}-m)^2}{2 \sigma m},~\sigma >0\label{eq:mass_lyapunov}
\end{equation}
and the adaptation evolution rule
\begin{equation}
\dot{\hat{m}} = - \sigma s_z \left( -K_{z1}(s_z - K_{z1}e_z) -e_z - K_{z2}s_z - \sum_{i=0}^{n-1} {\gamma_i} \xi_i^z \right). \label{eq:mass_adaptation}
\end{equation}
Then the augmented Lyapunov function $V_z+V_m$ has derivative 
$
\dot{V}_z+\dot{V}_m = -K_{z1}e_z^2 -K_{z2}s_z^2, \label{eq:final_alt_lyap} \leq 0.
$
We should note that there is no guarantee that $\hat{m} \rightarrow m$ but simply that $e_z$ and $s_z$ converge to zero.
\subsection*{Appendix C}
Given the system's attitude dynamics \eqref{eq:flex_rot_exp_2}, the attitude control input $\boldsymbol{\tau}^c$ \eqref{eq:main_control} is computed using adaptive control principles, in order to guarantee the stability of the vehicle's attitude. 

Given desired roll, pitch and yaw angles $(\phi^d, \theta^d, \psi^d)$, let the: a) attitude error vector 
$
\mathbf{e}_{\phi} = \left[ \phi, \theta, \psi\right]^T - \left[ \phi^d, \theta^d, \psi^d\right]^T, 
$
b) ideal angular velocity as 
$
\Omega^* = \Omega^d - K_{\phi} \mathbf{e}_{\phi},
$
where $K_{\phi}$ is a diagonal positive gain matrix, c) velocity error vector $\mathbf{z}_{\phi} = \Omega - \Omega^*$, d) adaptation estimates $\hat{\mathbf{J}}, \hat{\tau}^s$ for inertia matrix and asymmetric torques acting on the structure, and e) error matrix
$
\tilde{E} = I_3 - \mathbf{J}^{-1} \hat{\mathbf{J}},
$
where $I_3$ is the $3\times 3$ identity matrix.

Based on the backstepping principle, a composite Lyapunov function is defined, incorporating attitude errors and errors in unknown estimates
\begin{equation}
V = \frac{1}{2}\mathbf{e}_{\phi}^T \mathbf{e}_{\phi} + \frac{1}{2}\mathbf{z}_{\phi}^T \mathbf{z}_{\phi} + \frac{1}{2} \mbox{tr}\left( \tilde{E}^T \mathbf{J}^T \Lambda^{-1} \tilde{E}\right) + \frac{1}{2 \sigma} (\tau^s - \hat{\tau}^s)^T \mathbf{J}^{-1}(\tau^s - \hat{\tau}^s), \label{eq:total_attitude_lyapunov}
\end{equation}
where $\Lambda$ is a diagonal positive gain matrix and $\sigma$ is a positive constant. 

Substituting \eqref{eq:main_control} to \eqref{eq:total_attitude_lyapunov} and using the adaptation evolutions 
\begin{align}
\dot{\hat{\mathbf{J}}} &= \Lambda^{\top} \mathbf{z}_{\phi} 
\left[ -K_{\phi}(\mathbf{z}_{\phi}-K_{\phi}\mathbf{e}_{\phi})-\mathbf{e}_{\phi}-K_{\omega}\mathbf{z}_{\phi} \right]^{\top}
, \label{eq:inertia_adapt}\\
\dot{\hat{\tau}}^s &= \sigma \mathbf{z}_{\phi}, \label{eq:static_adapt}
\end{align}
then the derivative of the Lyapunov composite function is 
\begin{equation}
\dot{V} = -\mathbf{e}_{\phi}^T K_{\phi} \mathbf{e}_{\phi} - \mathbf{z}_{\phi}^T K_\omega \mathbf{z}_{\phi}\leq 0,
\end{equation}
In this derivation the symmetry and positive definiteness of the inertia matrices is used. 
The inertia matrix estimate $\hat{\mathbf{J}}(0)$ is computed using the application of the parallel axis theorem on the agent masses and $\hat{\tau}^s(0)=0$. Similarly, the same initial estimate for the system inertia matrix is used for the feedforward component of \eqref{eq:main_control}.

\section*{Acknowledgment}
This research was in part performed by using NYUAD's Core Technology Platform Kinesis lab  motion capture system. The authors thank Mr. Nikolaos Giakoumidis and Dr. Oraib Al Ketan for their technical support and insights. 
\bibliography{sn-bibliography}


\begin{thebibliography}{30}
\ifx \bisbn   \undefined \def \bisbn  #1{ISBN #1}\fi
\ifx \binits  \undefined \def \binits#1{#1}\fi
\ifx \bauthor  \undefined \def \bauthor#1{#1}\fi
\ifx \batitle  \undefined \def \batitle#1{#1}\fi
\ifx \bjtitle  \undefined \def \bjtitle#1{#1}\fi
\ifx \bvolume  \undefined \def \bvolume#1{\textbf{#1}}\fi
\ifx \byear  \undefined \def \byear#1{#1}\fi
\ifx \bissue  \undefined \def \bissue#1{#1}\fi
\ifx \bfpage  \undefined \def \bfpage#1{#1}\fi
\ifx \blpage  \undefined \def \blpage #1{#1}\fi
\ifx \burl  \undefined \def \burl#1{\textsf{#1}}\fi
\ifx \doiurl  \undefined \def \doiurl#1{\url{https://doi.org/#1}}\fi
\ifx \betal  \undefined \def \betal{\textit{et al.}}\fi
\ifx \binstitute  \undefined \def \binstitute#1{#1}\fi
\ifx \binstitutionaled  \undefined \def \binstitutionaled#1{#1}\fi
\ifx \bctitle  \undefined \def \bctitle#1{#1}\fi
\ifx \beditor  \undefined \def \beditor#1{#1}\fi
\ifx \bpublisher  \undefined \def \bpublisher#1{#1}\fi
\ifx \bbtitle  \undefined \def \bbtitle#1{#1}\fi
\ifx \bedition  \undefined \def \bedition#1{#1}\fi
\ifx \bseriesno  \undefined \def \bseriesno#1{#1}\fi
\ifx \blocation  \undefined \def \blocation#1{#1}\fi
\ifx \bsertitle  \undefined \def \bsertitle#1{#1}\fi
\ifx \bsnm \undefined \def \bsnm#1{#1}\fi
\ifx \bsuffix \undefined \def \bsuffix#1{#1}\fi
\ifx \bparticle \undefined \def \bparticle#1{#1}\fi
\ifx \barticle \undefined \def \barticle#1{#1}\fi
\bibcommenthead
\ifx \bconfdate \undefined \def \bconfdate #1{#1}\fi
\ifx \botherref \undefined \def \botherref #1{#1}\fi
\ifx \url \undefined \def \url#1{\textsf{#1}}\fi
\ifx \bchapter \undefined \def \bchapter#1{#1}\fi
\ifx \bbook \undefined \def \bbook#1{#1}\fi
\ifx \bcomment \undefined \def \bcomment#1{#1}\fi
\ifx \oauthor \undefined \def \oauthor#1{#1}\fi
\ifx \citeauthoryear \undefined \def \citeauthoryear#1{#1}\fi
\ifx \endbibitem  \undefined \def \endbibitem {}\fi
\ifx \bconflocation  \undefined \def \bconflocation#1{#1}\fi
\ifx \arxivurl  \undefined \def \arxivurl#1{\textsf{#1}}\fi
\csname PreBibitemsHook\endcsname

\bibitem{GT21}
\begin{barticle}
\bauthor{\bsnm{Gkountas}, \binits{K.}},
\bauthor{\bsnm{Tzes}, \binits{A.}}:
\batitle{Leader/follower force control of aerial manipulators}.
\bjtitle{IEEE Access}
\bvolume{9},
\bfpage{17584}--\blpage{17595}
(\byear{2021})
\end{barticle}
\endbibitem

\bibitem{crazyswarm}
\begin{bchapter}
\bauthor{\bsnm{Preiss}, \binits{J.A.}},
\bauthor{\bsnm{Honig}, \binits{W.}},
\bauthor{\bsnm{Sukhatme}, \binits{G.S.}},
\bauthor{\bsnm{Ayanian}, \binits{N.}}:
\bctitle{Crazyswarm: A large nano-quadcopter swarm}.
In: \bbtitle{2017 IEEE International Conference on Robotics and Automation
  (ICRA)},
\bconflocation{Marina Bay Sands, Singapore},
pp. \bfpage{3299}--\blpage{3304}
(\byear{2017})
\end{bchapter}
\endbibitem

\bibitem{CZW2019}
\begin{bchapter}
\bauthor{\bsnm{Chen}, \binits{M.}},
\bauthor{\bsnm{Zhou}, \binits{Z.}},
\bauthor{\bsnm{Wang}, \binits{R.}}:
\bctitle{Multi-rigid-body dynamics modeling and controller design for a
  connected {UAV} formation}.
In: \bbtitle{2019 IEEE 10th International Conference on Mechanical and
  Aerospace Engineering (ICMAE)},
\bconflocation{Brussels, Belgium},
pp. \bfpage{187}--\blpage{192}
(\byear{2019})
\end{bchapter}
\endbibitem

\bibitem{delivery_logistics}
\begin{barticle}
\bauthor{\bsnm{Sawadsitang}, \binits{S.}},
\bauthor{\bsnm{Niyato}, \binits{D.}},
\bauthor{\bsnm{Tan}, \binits{P.-S.}},
\bauthor{\bsnm{Wang}, \binits{P.}}:
\batitle{Joint ground and aerial package delivery services: A stochastic
  optimization approach}.
\bjtitle{IEEE Transactions on Intelligent Transportation Systems}
\bvolume{20}(\bissue{6}),
\bfpage{2241}--\blpage{2254}
(\byear{2018})
\end{barticle}
\endbibitem

\bibitem{MFK2011}
\begin{barticle}
\bauthor{\bsnm{Michael}, \binits{N.}},
\bauthor{\bsnm{Fink}, \binits{J.}},
\bauthor{\bsnm{Kumar}, \binits{V.}}:
\batitle{Cooperative manipulation and transportation with aerial robots}.
\bjtitle{Autonomous Robots}
\bvolume{30}(\bissue{1}),
\bfpage{73}--\blpage{86}
(\byear{2011})
\end{barticle}
\endbibitem

\bibitem{LKK2016}
\begin{barticle}
\bauthor{\bsnm{Lee}, \binits{H.}},
\bauthor{\bsnm{Kim}, \binits{H.}},
\bauthor{\bsnm{Kim}, \binits{H.J.}}:
\batitle{Planning and control for collision-free cooperative aerial
  transportation}.
\bjtitle{IEEE Transactions on Automation Science and Engineering}
\bvolume{15}(\bissue{1}),
\bfpage{189}--\blpage{201}
(\byear{2016})
\end{barticle}
\endbibitem

\bibitem{PNMC2020}
\begin{barticle}
\bauthor{\bsnm{Pierri}, \binits{F.}},
\bauthor{\bsnm{Nigro}, \binits{M.}},
\bauthor{\bsnm{Muscio}, \binits{G.}},
\bauthor{\bsnm{Caccavale}, \binits{F.}}:
\batitle{Cooperative manipulation of an unknown object via omnidirectional
  unmanned aerial vehicles}.
\bjtitle{Journal of Intelligent \& Robotic Systems}
\bvolume{100}(\bissue{3}),
\bfpage{1635}--\blpage{1649}
(\byear{2020})
\end{barticle}
\endbibitem

\bibitem{LKK2015}
\begin{bchapter}
\bauthor{\bsnm{Lee}, \binits{H.}},
\bauthor{\bsnm{Kim}, \binits{H.}},
\bauthor{\bsnm{Kim}, \binits{H.J.}}:
\bctitle{Path planning and control of multiple aerial manipulators for a
  cooperative transportation}.
In: \bbtitle{2015 IEEE/RSJ International Conference on Intelligent Robots and
  Systems (IROS)},
\bconflocation{Hamburg, Germany},
pp. \bfpage{2386}--\blpage{2391}
(\byear{2015})
\end{bchapter}
\endbibitem

\bibitem{Aeroarms}
\begin{barticle}
\bauthor{\bsnm{Ollero}, \binits{A.}},
\bauthor{\bsnm{Heredia}, \binits{G.}},
\bauthor{\bsnm{Franchi}, \binits{A.}},
\bauthor{\bsnm{Antonelli}, \binits{G.}},
\bauthor{\bsnm{Kondak}, \binits{K.}},
\bauthor{\bsnm{Sanfeliu}, \binits{A.}},
\bauthor{\bsnm{Viguria}, \binits{A.}},
\bauthor{\bsnm{Martinez-de~Dios}, \binits{J.R.}},
\bauthor{\bsnm{Pierri}, \binits{F.}},
\bauthor{\bsnm{Cort{\'e}s}, \binits{J.}}, \betal:
\batitle{The {AEROARMS} project: Aerial robots with advanced manipulation
  capabilities for inspection and maintenance}.
\bjtitle{IEEE Robotics \& Automation Magazine}
\bvolume{25}(\bissue{4}),
\bfpage{12}--\blpage{23}
(\byear{2018})
\end{barticle}
\endbibitem

\bibitem{MS2021}
\begin{barticle}
\bauthor{\bsnm{Masone}, \binits{C.}},
\bauthor{\bsnm{Stegagno}, \binits{P.}}:
\batitle{Shared control of an aerial cooperative transportation system with a
  cable-suspended payload}.
\bjtitle{Journal of Intelligent \& Robotic Systems}
\bvolume{103}(\bissue{3}),
\bfpage{1}--\blpage{29}
(\byear{2021})
\end{barticle}
\endbibitem

\bibitem{RTCSL2019}
\begin{barticle}
\bauthor{\bsnm{Rossi}, \binits{E.}},
\bauthor{\bsnm{Tognon}, \binits{M.}},
\bauthor{\bsnm{Carli}, \binits{R.}},
\bauthor{\bsnm{Schenato}, \binits{L.}},
\bauthor{\bsnm{Cort{\'e}s}, \binits{J.}},
\bauthor{\bsnm{Franchi}, \binits{A.}}:
\batitle{Cooperative aerial load transportation via sampled communication}.
\bjtitle{IEEE Control Systems Letters}
\bvolume{4}(\bissue{2}),
\bfpage{277}--\blpage{282}
(\byear{2019})
\end{barticle}
\endbibitem

\bibitem{UEM2020}
\begin{barticle}
\bauthor{\bsnm{Umemoto}, \binits{K.}},
\bauthor{\bsnm{Endo}, \binits{T.}},
\bauthor{\bsnm{Matsuno}, \binits{F.}}:
\batitle{Dynamic cooperative transportation control using friction forces of n
  multi-rotor unmanned aerial vehicles}.
\bjtitle{Journal of Intelligent \& Robotic Systems}
\bvolume{100},
\bfpage{1085}--\blpage{1095}
(\byear{2020})
\end{barticle}
\endbibitem

\bibitem{OD2011}
\begin{barticle}
\bauthor{\bsnm{Oung}, \binits{R.}},
\bauthor{\bsnm{D’Andrea}, \binits{R.}}:
\batitle{The distributed flight array}.
\bjtitle{Mechatronics}
\bvolume{21}(\bissue{6}),
\bfpage{908}--\blpage{917}
(\byear{2011})
\end{barticle}
\endbibitem

\bibitem{MC2019}
\begin{barticle}
\bauthor{\bsnm{Mu}, \binits{B.}},
\bauthor{\bsnm{Chirarattananon}, \binits{P.}}:
\batitle{Universal flying objects: Modular multirotor system for flight of
  rigid objects}.
\bjtitle{IEEE Transactions on Robotics}
\bvolume{36}(\bissue{2}),
\bfpage{458}--\blpage{471}
(\byear{2019})
\end{barticle}
\endbibitem

\bibitem{SGLYK2018}
\begin{bchapter}
\bauthor{\bsnm{Saldana}, \binits{D.}},
\bauthor{\bsnm{Gabrich}, \binits{B.}},
\bauthor{\bsnm{Li}, \binits{G.}},
\bauthor{\bsnm{Yim}, \binits{M.}},
\bauthor{\bsnm{Kumar}, \binits{V.}}:
\bctitle{Modquad: The flying modular structure that self-assembles in midair}.
In: \bbtitle{2018 IEEE International Conference on Robotics and Automation
  (ICRA)},
\bconflocation{Brisbane, Australia},
pp. \bfpage{691}--\blpage{698}
(\byear{2018})
\end{bchapter}
\endbibitem

\bibitem{gabrich2020modquad}
\begin{bchapter}
\bauthor{\bsnm{Gabrich}, \binits{B.}},
\bauthor{\bsnm{Li}, \binits{G.}},
\bauthor{\bsnm{Yim}, \binits{M.}}:
\bctitle{{ModQuad-DoF}: A novel yaw actuation for modular quadrotors}.
In: \bbtitle{2020 IEEE International Conference on Robotics and Automation
  (ICRA)},
pp. \bfpage{8267}--\blpage{8273}
(\byear{2020}).
\bcomment{IEEE}
\end{bchapter}
\endbibitem

\bibitem{MATA2019}
\begin{bchapter}
\bauthor{\bsnm{Mor{\'\i}n}, \binits{D.G.}},
\bauthor{\bsnm{Araujo}, \binits{J.}},
\bauthor{\bsnm{Tayamon}, \binits{S.}},
\bauthor{\bsnm{Andersson}, \binits{L.A.}}:
\bctitle{Autonomous cooperative flight of rigidly attached quadcopters}.
In: \bbtitle{2019 IEEE International Conference on Robotics and Automation
  (ICRA)},
\bconflocation{MOntreal, Canada},
pp. \bfpage{5309}--\blpage{5315}
(\byear{2019})
\end{bchapter}
\endbibitem

\bibitem{OJ2021}
\begin{bchapter}
\bauthor{\bsnm{Oishi}, \binits{K.}},
\bauthor{\bsnm{Jimbo}, \binits{T.}}:
\bctitle{Autonomous cooperative transportation system involving multi-aerial
  robots with variable attachment mechanism}.
In: \bbtitle{2021 IEEE/RSJ International Conference on Intelligent Robots and
  Systems (IROS)},
pp. \bfpage{6322}--\blpage{6328}
(\byear{2021}).
\bcomment{IEEE}
\end{bchapter}
\endbibitem

\bibitem{SYGRT2021}
\begin{barticle}
\bauthor{\bsnm{Su}, \binits{Y.}},
\bauthor{\bsnm{Yu}, \binits{P.}},
\bauthor{\bsnm{Gerber}, \binits{M.J.}},
\bauthor{\bsnm{Ruan}, \binits{L.}},
\bauthor{\bsnm{Tsao}, \binits{T.-C.}}:
\batitle{Nullspace-based control allocation of overactuated uav platforms}.
\bjtitle{IEEE Robotics and Automation Letters}
\bvolume{6}(\bissue{4}),
\bfpage{8094}--\blpage{8101}
(\byear{2021})
\end{barticle}
\endbibitem

\bibitem{CETK22_icuas}
\begin{bchapter}
\bauthor{\bsnm{Chaikalis}, \binits{D.}},
\bauthor{\bsnm{Evangeliou}, \binits{N.}},
\bauthor{\bsnm{Tzes}, \binits{A.}},
\bauthor{\bsnm{Khorrami}, \binits{F.}}:
\bctitle{Adaptive control of coupled-multicopter systems}.
In: \bbtitle{2022 International Conference on Unmanned Aircraft Systems
  (ICUAS)},
pp. \bfpage{1375}--\blpage{1380}
(\byear{2022}).
\bcomment{IEEE}
\end{bchapter}
\endbibitem

\bibitem{M1992}
\begin{barticle}
\bauthor{\bsnm{Morgul}, \binits{O.}}:
\batitle{Dynamic boundary control of an {Euler-Bernoulli} beam}.
\bjtitle{IEEE Transactions on automatic control}
\bvolume{37}(\bissue{5}),
\bfpage{639}--\blpage{642}
(\byear{1992})
\end{barticle}
\endbibitem

\bibitem{MCO2021}
\begin{barticle}
\bauthor{\bsnm{Martins}, \binits{L.}},
\bauthor{\bsnm{Cardeira}, \binits{C.}},
\bauthor{\bsnm{Oliveira}, \binits{P.}}:
\batitle{Feedback linearization with zero dynamics stabilization for quadrotor
  control}.
\bjtitle{Journal of Intelligent \& Robotic Systems}
\bvolume{101}(\bissue{1}),
\bfpage{1}--\blpage{17}
(\byear{2021})
\end{barticle}
\endbibitem

\bibitem{ADGS2012}
\begin{bchapter}
\bauthor{\bsnm{Achtelik}, \binits{M.}},
\bauthor{\bsnm{Doth}, \binits{K.-M.}},
\bauthor{\bsnm{Gurdan}, \binits{D.}},
\bauthor{\bsnm{Stumpf}, \binits{J.}}:
\bctitle{Design of a multi rotor {MAV} with regard to efficiency, dynamics and
  redundancy}.
In: \bbtitle{AIAA Guidance, Navigation, and Control Conference},
\bconflocation{Minneapolis, Minnesota},
p. \bfpage{4779}
(\byear{2012})
\end{bchapter}
\endbibitem

\bibitem{abeywickrama2018comprehensive}
\begin{barticle}
\bauthor{\bsnm{Abeywickrama}, \binits{H.V.}},
\bauthor{\bsnm{Jayawickrama}, \binits{B.A.}},
\bauthor{\bsnm{He}, \binits{Y.}},
\bauthor{\bsnm{Dutkiewicz}, \binits{E.}}:
\batitle{Comprehensive energy consumption model for unmanned aerial vehicles,
  based on empirical studies of battery performance}.
\bjtitle{IEEE Access}
\bvolume{6},
\bfpage{58383}--\blpage{58394}
(\byear{2018})
\end{barticle}
\endbibitem

\bibitem{MSMK2013}
\begin{bchapter}
\bauthor{\bsnm{Mellinger}, \binits{D.}},
\bauthor{\bsnm{Shomin}, \binits{M.}},
\bauthor{\bsnm{Michael}, \binits{N.}},
\bauthor{\bsnm{Kumar}, \binits{V.}}:
\bctitle{Cooperative grasping and transport using multiple quadrotors}.
In: \bbtitle{Distributed Autonomous Robotic Systems},
pp. \bfpage{545}--\blpage{558}.
\bpublisher{Springer},
\blocation{New {Y}ork}
(\byear{2013})
\end{bchapter}
\endbibitem

\bibitem{bauersfeld2022range}
\begin{barticle}
\bauthor{\bsnm{Bauersfeld}, \binits{L.}},
\bauthor{\bsnm{Scaramuzza}, \binits{D.}}:
\batitle{Range, endurance, and optimal speed estimates for multicopters}.
\bjtitle{IEEE Robotics and Automation Letters}
\bvolume{7}(\bissue{2}),
\bfpage{2953}--\blpage{2960}
(\byear{2022})
\end{barticle}
\endbibitem

\bibitem{hwang2018practical}
\begin{barticle}
\bauthor{\bsnm{Hwang}, \binits{M.-h.}},
\bauthor{\bsnm{Cha}, \binits{H.-R.}},
\bauthor{\bsnm{Jung}, \binits{S.Y.}}:
\batitle{Practical endurance estimation for minimizing energy consumption of
  multirotor unmanned aerial vehicles}.
\bjtitle{Energies}
\bvolume{11}(\bissue{9}),
\bfpage{2221}
(\byear{2018})
\end{barticle}
\endbibitem

\bibitem{giernacki2017crazyflie}
\begin{bchapter}
\bauthor{\bsnm{Giernacki}, \binits{W.}},
\bauthor{\bsnm{Skwierczy{\'n}ski}, \binits{M.}},
\bauthor{\bsnm{Witwicki}, \binits{W.}},
\bauthor{\bsnm{Wro{\'n}ski}, \binits{P.}},
\bauthor{\bsnm{Kozierski}, \binits{P.}}:
\bctitle{Crazyflie 2.0 quadrotor as a platform for research and education in
  robotics and control engineering}.
In: \bbtitle{22nd International Conference on Methods and Models in Automation
  and Robotics (MMAR)},
pp. \bfpage{37}--\blpage{42}
(\byear{2017}).
\bcomment{IEEE}
\end{bchapter}
\endbibitem

\bibitem{HA2017}
\begin{bchapter}
\bauthor{\bsnm{H{\"o}nig}, \binits{W.}},
\bauthor{\bsnm{Ayanian}, \binits{N.}}:
\bctitle{Flying multiple {UAV}s using {ROS}}.
In: \bbtitle{Robot Operating System (ROS)},
pp. \bfpage{83}--\blpage{118}.
\bpublisher{Springer},
\blocation{New {Y}ork}
(\byear{2017})
\end{bchapter}
\endbibitem

\bibitem{LAWCS2013}
\begin{bchapter}
\bauthor{\bsnm{Lynen}, \binits{S.}},
\bauthor{\bsnm{Achtelik}, \binits{M.W.}},
\bauthor{\bsnm{Weiss}, \binits{S.}},
\bauthor{\bsnm{Chli}, \binits{M.}},
\bauthor{\bsnm{Siegwart}, \binits{R.}}:
\bctitle{A robust and modular multi-sensor fusion approach applied to {MAV}
  navigation}.
In: \bbtitle{2013 IEEE/RSJ International Conference on Intelligent Robots and
  Systems},
\bconflocation{Tokyo, Japan},
pp. \bfpage{3923}--\blpage{3929}
(\byear{2013})
\end{bchapter}
\endbibitem

\end{thebibliography}
\end{document}